\documentclass{article}

 \usepackage[preprint]{neurips_2026}

\usepackage[utf8]{inputenc} 
\usepackage[T1]{fontenc}    
\usepackage{hyperref}       
\usepackage{url}            
\usepackage{booktabs}       
\usepackage{amsfonts}       
\usepackage{nicefrac}       
\usepackage{microtype}      
\usepackage{xcolor}         
\usepackage{graphicx}
\usepackage{algorithm}
\usepackage{algorithmic}
\usepackage{url}
\usepackage{color}
\usepackage{colortbl}
\usepackage{multirow}
\usepackage{multicol}
\usepackage{diagbox}
\usepackage{wrapfig}
\usepackage{makecell}
\usepackage{appendix}
\usepackage{subcaption}
\usepackage{wrapfig}
\usepackage{indentfirst}
\definecolor{tableColor}{HTML}{e9f1f6}

\usepackage{amsmath,amssymb,amsthm}
\usepackage{enumitem}
\usepackage{bm}
\usepackage{mathtools}

\newtheorem{theorem}{Theorem}

\newtheorem{assumption}{Assumption}
\newtheorem{remark}{Remark}
\newcommand{\E}{\mathbb{E}}

\title{HTPO: Towards Exploration-Exploitation Balanced Policy Optimization via Hierarchical Token-level Objective Control}

\author{%
  Xincheng Yao$^{1}$\thanks{Work done during internship at Xiaohongshu.}, Ruoqi Li$^{2}$\thanks{Equal contribution.}, Cheng Chen$^{2}$\thanks{Project lead.}, Daoxin Zhang$^2$, Yi Wu$^2$, Yao Hu$^2$, \\
  \textbf{Chongyang Zhang}$^{1,3}$\thanks{Corresponding Author.} \\
  $^1$School of Information Science and Electronic Engineering, Shanghai Jiao Tong University\\
  $^2$Xiaohongshu Inc.\\
  $^3$MoE Key Lab of Artificial Intelligence, AI Institute, Shanghai Jiao Tong University.\\
  \texttt{\{i-dover, sunny\_zhang\}@sjtu.edu.cn$^1$} \\
  \texttt{\{liruoqi, chengchen7, tangxiaohui, luyun2, xiahou\}@xiaohongshu.com$^2$} \\
}

\begin{document}

\maketitle

\begin{abstract}
  Reinforcement Learning with Verifiable Rewards (RLVR) has emerged as a pivotal technique for enhancing the reasoning capabilities of Large Language Models (LLMs). However, the de facto practice of mainstream RL algorithms is to treat all tokens of one response equally and assign the same optimization objective to each token, failing to provide granular guidance for the reasoning process. While in Chain-of-Thought (CoT) reasoning, different tokens usually play distinct roles (\emph{i.e.}, high-entropy tokens often act as ``forking'' points that can promote exploration, low-entropy tokens tend to execute reasoning steps along the established path, offering minimal benefit to exploration). Therefore, the current RL algorithms lack an effective mechanism to dynamically balance the exploration-exploitation trade-off during learning. To this end, we propose \textbf{H}ierarchical \textbf{T}oken-level Objective Control \textbf{P}olicy \textbf{Op}timization (\textbf{HTPO}), a novel RL algorithm that takes the divide-and-conquer idea to hierarchically partition the response tokens into specific functional groups from three aspects (\emph{i.e.}, prompt difficulty, answer correctness, and token entropy). Within each group, according to the contributions to exploration or exploitation, we design specialized optimization objectives to facilitate the effective execution of each token’s expected functionality. In this way, HTPO can achieve a more balanced exploration-exploitation trade-off. Extensive experiments on challenging reasoning benchmarks validate the superiority of our HTPO algorithm, which significantly outperforms the strong DAPO baseline (\emph{e.g.}, +8.6\% and +6.7\% on AIME'24 and AIME'25, respectively). When scaling test-time compute, the HTPO-trained model maintains a consistent performance advantage over the DAPO baseline, and the gap widens as the sampling budget increases, validating that our adaptive token-level control method fosters effective exploration without sacrificing exploitation performance. Code will be at \url{https://github.com/xcyao00/HTPO}.
\end{abstract}

\section{Introduction}
\label{sec:introduction}

The reasoning capabilities of Large Language Models (LLMs) have evolved substantially, enabling deeper cognitive processes for challenging tasks in mathematics, programming, and open-domain agentic tasks \cite{OpenAI-o1, DeepSeek-R1, Claude-3.7, Kimi-1.5, Qwen3, Tongyi-DeepSearch}. A pivotal technique driving this evolution is large-scale Reinforcement Learning (RL), where models acquire dynamic, multi-step problem-solving strategies, and then consolidate correct cognition and avoid erroneous attempts through RL optimization objectives. In the post-training of LLMs, RL methods have undergone a paradigm shift from the early Proximal Policy Optimization (PPO) \cite{PPO} to the recent Group Relative Policy Optimization (GRPO) \cite{GRPO}, which largely simplifies the optimization landscape by removing the cumbersome reward and value models. Afterwards, numerous GRPO-style and advanced RL works have emerged, stemming from algorithmic innovations \cite{DAPO, VAPO, GSPO, BAPO, SAPO} and counterintuitive empirical insights \cite{DoesRL, RLone}. 

\begin{figure}[ht]
    \centering
    \includegraphics[width=1.0\linewidth]{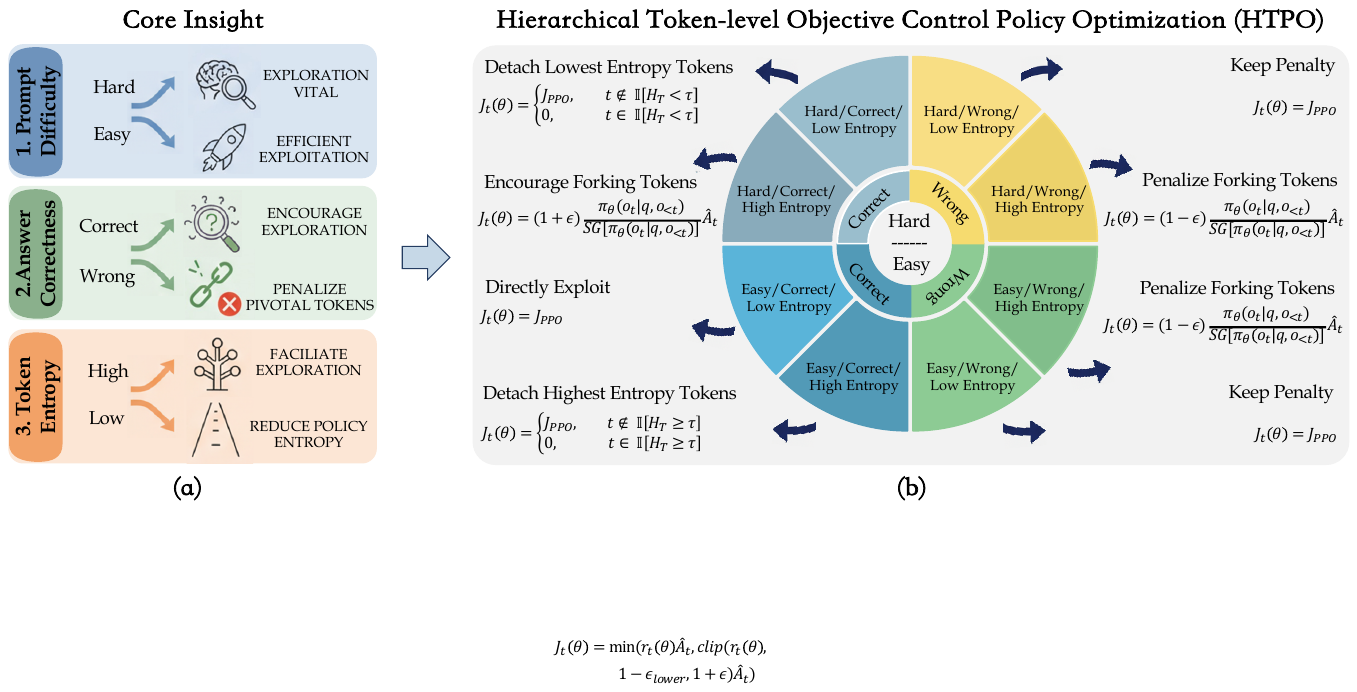}
    \caption{(a) Motivation of our method. Different tokens should not be simply considered as the same, but rather have their respective expected roles according to the prompt difficulty, answer correctness, and token entropy. (b) In HTPO, we hierarchically divide the response tokens into different groups and assign specialized optimization objectives for each group to enhance the efficacy of each token's intended functionality.}
    \label{fig:motivation}
\end{figure}

However, the basic optimization objectives in existing RL implementations still follow the PPO-style clipped loss function (\emph{i.e.}, ${\rm min}\big(r_t(\theta)\hat{A}_t,{\rm clip}(r_t(\theta), 1 - \epsilon, 1 + \epsilon)\hat{A}_t\big)$). A critical issue in PPO and the subsequent RL algorithms is their uniform treatment of all response tokens, assigning an identical optimization objective to each token regardless of its specific role. However, concerning the exploration-exploitation trade-off, different tokens may have distinctive contributions \cite{Entropy-Mechanism, 80/20-rule}, which should be especially discussed in terms of easy or hard problems, as well as correct or wrong answers. These previous approaches neglect the heterogeneous functional roles that tokens play in reasoning processes, potentially hindering further performance gains due to the lack of granular guidance for the reasoning trajectories.

Entropy can serve as a key mechanism to reveal and research the specific roles that different tokens play in CoTs and RL training. One previous work \cite{80/20-rule} has shown that high-entropy tokens are usually ``forking'' tokens that function as pivotal decision points to determine the reasoning trajectory, thereby facilitating reasoning and promoting exploration. In contrast, low-entropy tokens typically act as ``following'' tokens that primarily execute reasoning steps along the established path to guarantee the ongoing linguistic coherence, offering minimal benefit to exploration and potentially impeding it. The authors in \cite{Entropy-Mechanism} also indicate that a token with high advantage and high probability
(low entropy) would reduce policy entropy (causing exploration degradation), whereas a rare token (high entropy) with high advantage would increase policy entropy. Moreover, from the perspective of problem difficulty, different tokens usually play distinct roles as well. For challenging problems, low-certainty (high entropy) tokens will signal valuable exploration; for simpler problems, high-certainty (low entropy) tokens often reflect efficient and reliable reasoning \cite{DACE}. Therefore, to trade off exploration and exploitation, we should also encourage exploration for difficult tasks and take advantage of exploitation for easier ones to enhance learning efficiency. 

In this paper, building upon the above insight, we propose that RL algorithms should eschew the uniform treatment of tokens. The optimal strategy is to distinguish and process tokens separately contingent on their specific roles in CoTs, and then design tailored optimization objectives to more effectively incentivize their roles in reasoning. To this end, we introduce \textbf{H}ierarchical \textbf{T}oken-level Objective Control \textbf{P}olicy \textbf{O}ptimization (\textbf{HTPO}), a novel RL algorithm that takes the divide-and-conquer idea to hierarchically partition the response tokens into eight functional groups from three aspects, including prompt difficulty, answer correctness, and token entropy (see Fig.\ref{fig:motivation}(a)). Regarding prompt difficulty, the core insight is: exploration is vital for hard tasks, whereas exploitation encourages convergence efficiency on easy ones. For answer correctness, the insight is: for correct responses, exploration should still be encouraged to avoid a bias toward over-exploitation; for incorrect responses, pivotal error-inducing tokens should incur more penalties, while neutral tokens should be spared from excessive penalization. With respect to token entropy, the insight is: high-entropy tokens would facilitate policy exploration, whereas learning from low-entropy tokens tends to reduce policy entropy. While each aspect can independently inform token differentiation, our HTPO integrates all three aspects and hierarchically divides the response tokens into eight functional groups. Then, within each group, according to the contributions to exploration or exploitation, we design specialized optimization objectives to facilitate the effective execution of each token's expected functionality (see Fig.\ref{fig:motivation}(b)). For instance, for high-entropy tokens in correct responses to hard prompts, we propose a fixed importance ratio-based non-clipping optimization objective to promote exploration. Conversely, some of the lowest-entropy tokens are detached from the policy gradient to avoid the model's over-exploitation. Our main contributions are as follows:

1. We identify and analyze one key issue behind the previous RL algorithms for LLMs: the identical optimization objective applied to all tokens lacks granular guidance for the reasoning process.

2. We propose HTPO, a novel RL algorithm that hierarchically divides the response tokens from three aspects and assigns tailored optimization objectives to different groups of tokens for achieving a more balanced exploration-exploitation trade-off.

3. We validate HTPO on both the pre-trained and instruction finetuned LLMs and across multiple reasoning benchmarks, demonstrating that it can facilitate effective exploration without sacrificing exploitation performance, delivering significant performance gains over strong baselines. 

\section{Related Work}
\label{sec:related_work}

\textbf{Reinforcement Learning for LLMs.} Before the advent of reasoning-capable models like OpenAI’s o1 \cite{OpenAI-o1}, RL was mainly used to optimize the instruction-following and human preference alignment capabilities of pretrained LLMs \cite{InstructGPT, RLAIF}. PPO \cite{PPO} dominated this era, while powerful, it is notoriously cumbersome, necessitating separate reward and value models that introduce significant computational overhead and training complexity. Therefore, the industry urgently sought a more stable and scalable alternative. Direct Preference Optimization (DPO) \cite{DPO} emerged and quickly marked a milestone by bypassing explicit reward modeling and value model optimization. During this period, we witnessed a proliferation of DPO variants, such as SimPO \cite{SimPO}, KTO \cite{KTO}, ORPO \cite{ORPO}, and BCO \cite{BCO}. However, DPO-like algorithms rely on pre-collected preference data pairs, and due to offline optimization, their performance upper bounds are inferior to online algorithms. Recently, DeepSeek-R1 \cite{DeepSeek-R1} has successfully sparked the popularity of RL with Verifiable Rewards (RLVR) for enhancing reasoning in LLMs, particularly in mathematics, programming, and open-domain agentic tasks \cite{Qwen3, Kimi-k2, GLM-4.5, Claude-4.6, gpt5.2, gemini-3}. DeepSeek-R1 stands out for demonstrating that strong reasoning can emerge only through outcome-based rewards using the online RLVR algorithm like GRPO \cite{GRPO}. Another prominent point is the ``zero RL'' paradigm, where reasoning abilities can be elicited from the base model without conventional SFT cold-start. Afterwards, many open-source works, such as SimpleRLZoo \cite{simplerl-zoo}, and Open-Reasoner-Zero \cite{open-reasoner} have attempted to reproduce DeepSeek-R1. And subsequent RL methods, including DAPO \cite{DAPO}, VAPO \cite{VAPO}, GSPO \cite{GSPO}, BAPO \cite{BAPO}, SAPO \cite{SAPO}, etc, have largely built upon GRPO. In this work, our HTPO is mainly based on a strong baseline, DAPO.

\textbf{Analysis on Reinforcement Learning.} Recent studies have scrutinized the characteristics of RLVR, with a particular focus on the evolution of policy entropy. A key consensus is that not all tokens in CoTs are equally informative, as reasoning behaviors are primarily governed by a minority of ``critical tokens''. In \cite{critical-tokens1, critical-tokens2}, the authors identify ``critical tokens'' as decision points that significantly influence incorrect outcomes, and demonstrate that replacing these tokens can alter model behaviors. Subsequently, entropy has been adopted as a key mechanism for analyzing the dynamics of RLVR. In \cite{80/20-rule}, the authors define ``critical tokens'' through entropy and find that high-entropy minority tokens are closely related to ``critical tokens'' and often act as ``forks'' in the reasoning process. When utilizing only 20\% high-entropy tokens, the performance can be maintained comparable to full-gradient updates, whereas training on 80\% lowest-entropy tokens results in a marked performance decline. In \cite{Entropy-Mechanism}, the authors propose the \emph{Entropy Mechanism} to explain why policy entropy sharply declines in a few training steps. The core observation is that tokens with high advantages and high probabilities would reduce policy entropy, while rare tokens with high advantages would increase entropy. In BAPO \cite{BAPO}, similar findings have been proposed, where tokens that are positive with high probabilities or negative with low probabilities contribute to a reduction in the overall entropy. In this work, dividing tokens based on entropy also draws inspiration from these analytical studies.

\section{Preliminaries}
\label{sec:preliminaries}

The current RL algorithms for post-training LLMs are mostly based on policy gradient, for which the primitive form can be traced back to REINFORCE \cite{REINFORCE}, which can be written as follows:
\begin{equation}
    \nabla(\theta) = \mathbb{E}_{o \sim \pi_{\theta}(\cdot|q)} \bigg[\frac{1}{|o|}\sum_{t=1}^{|o|}\nabla_\theta{\rm log}\pi_\theta(o_t|q,o_{<t})\hat{A}_t
     \bigg].
\end{equation}
where $\pi_\theta$ means policy model, $\hat{A}_t$ represents the advantage value of the $t$-th token. In the LLM RL research, most mainstream RL algorithms \cite{PPO, GRPO, DAPO, GSPO} follow the PPO-style clipped loss function as the optimization objective, which is typically as follows (for each token $o_t$):
\begin{equation}
\label{eq:base_objective}
    J_t(\theta) = 
     {\rm min}\Big(r_t(\theta)\hat{A}_t,{\rm clip}(r_t(\theta), 1 - \epsilon, 1 + \epsilon)\hat{A}_t\Big).
\end{equation}

where $r_t(\theta) = \frac{\pi_\theta(o_t|q,o_{<t})}{\pi_{\theta_{old}}(o_t|q,o_{<t})}$ is the importance ratio for off-policy learning. In PPO \cite{PPO}, each prompt $q$ generates one response, the overall objective is averaged across all tokens: $\mathcal{J}_{PPO}(\theta) = \mathbb{E}_{ o \sim \pi_{\theta_{old}}(\cdot|q)} \Big[\frac{1}{|o|}\sum_{t=1}^{|o|}J_t(\theta)\Big]$. In GRPO \cite{GRPO}, a group of responses $\{o_i\}_{i=1}^G$ will be sampled from the prompt $q$, and the overall objective is averaged across all responses and tokens: $\mathcal{J}_{GRPO}(\theta) = \mathbb{E}_{ \{o_i\}_{i=1}^G \sim \pi_{\theta_{old}}(\cdot|q)} \Big[\frac{1}{G}\sum_{i=1}^{G}\frac{1}{|o_i|}\sum_{t=1}^{|o_i|}J_{i,t}(\theta)\Big]$. The gradient for the above token-wise optimization objective is derived as follows:
\begin{equation}
    \nabla_t(\theta) = 
     \frac{\hat{A}_t}{\pi_{\theta_{old}}(o_t|q,o_{<t})}\nabla_\theta\pi_\theta(o_t|q,o_{<t}).
\end{equation}

For the $t$-th token, the policy gradient in REINFORCE can be rewritten as $\nabla^{RF}_t(\theta) = \frac{\hat{A}_t}{SG[\pi_\theta(o_t|q,o_{<t})]}\nabla_\theta\pi_\theta(o_t|q,o_{<t})$, where $SG[\cdot]$ means \emph{stop-gradient} operation. It can be found that the policy gradient in PPO $\nabla^{PPO}_t(\theta) = \frac{\hat{A}_t}{\pi_{\theta_{old}}(o_t|q,o_{<t})}\nabla_\theta\pi_\theta(o_t|q,o_{<t})$ can be represented as $\nabla^{PPO}_t(\theta) = \frac{\hat{A}_t}{SG[\pi_\theta(o_t|q,o_{<t})]} \cdot \frac{SG[\pi_\theta(o_t|q,o_{<t})]}{\pi_{\theta_{old}}(o_t|q,o_{<t})}\nabla_\theta\pi_\theta(o_t|q,o_{<t}) = SG[r_t(\theta)]\nabla_t^{RF}(\theta)$. Moreover, $\nabla_t^{PPO}(\theta)$ can be rewritten as $\nabla_t^{PPO}(\theta) = SG[r_t(\theta)]\hat{A}_t\nabla_\theta{\rm log}\pi_\theta(o_t|q,o_{<t})$. Therefore, both the importance ratio $r_t(\theta)$ and the advantage $\hat{A}_t$ can be seen as weights for the gradient of model $\pi_\theta$ for predicting $o_t$. If we wish to increase the probability of $o_t$, the $\hat{A}_t$ should be a positive value (\emph{i.e.}, the $r_t(\theta)$ is always positive), and the larger $r_t(\theta)$ and $\hat{A}_t$, the larger the model gradient for increasing the probability of $o_t$. On the contrary, if we wish to reduce the probability of $o_t$, $\hat{A}_t$ should be a negative value, and also the larger $r_t(\theta)$ and $\hat{A}_t$, the larger the gradient for reducing the probability of $o_t$.

In addition, we further point out that in Eq.(\ref{eq:base_objective}), $\hat{A}_t$ is a scalar and the gradient stems from differentiating $r_t(\theta)$. When $r_t(\theta)$ is outside the range $(1-\epsilon, 1+\epsilon)$, the ${\rm clip}$ operator will utilize a scalar to replace $r_t(\theta)$. Therefore, for these out-of-range tokens, they will not participate in optimization. Due to the ${\rm min}$ operator, when $\hat{A}_t > 0$, tokens with $r_t(\theta) > 1+\epsilon$ will not participate in optimization; when $\hat{A}_t < 0$, tokens with $r_t(\theta) < 1-\epsilon$ will be removed out. With all the basic analyses above, it's conducive for us to understand the proposed RL algorithm in the following section.

\section{Method}
\label{sec:method}

The core insight of our method is that since response tokens usually play heterogeneous roles in reasoning processes (to support this statement, we provide a detailed analysis of token entropy patterns in Appendix \ref{sec:token_entropy_analysis}. where the experimental evidence also can support the rationality of our proposed method described below), the optimal strategy is to distinguish tokens into distinct functional groups contingent on their specific roles in CoTs and design tailored optimization objectives that conform to the roles. Specifically, we comprehensively consider three aspects, including prompt difficulty, answer correctness, and token entropy, to hierarchically divide the response tokens into eight functional groups. In the following, we will analyze these token groups and design their corresponding optimization objectives in sequence. Before we begin, it is worth noting that the optimization objectives are not entirely distinct across groups but share commonalities. To facilitate understanding, we still organize the presentation by token groups rather than by objective formulations.

\textbf{Group 1 (hard prompt, correct answer, low-entropy token).} For hard prompts, we should encourage exploration (\emph{i.e.} for finding novel and correct solutions). However, according to the 80/20 rule \cite{80/20-rule}, the majority of tokens are low-entropy, which follow the existing reasoning path and primarily guarantee the ongoing linguistic coherence. These low-entropy tokens tend to reduce policy entropy during training, leading to degraded model exploration. Consequently, constraints should be applied to these tokens to avoid rapid decreases in policy entropy. As indicated in \cite{Entropy-Mechanism}, tokens with higher certainty (\emph{i.e.}, lower entropy) would exert a greater impact on the decrease of policy entropy. Therefore, to impose constraints, we adopt a straightforward strategy: discarding the policy gradients of some lowest-entropy tokens. The token-wise objective is defined as follows:
\begin{equation}
\label{eq:case1_objective}
    J_t(\theta) = \begin{cases}
    {\rm min}\Big(r_t(\theta)\hat{A}_t,{\rm clip}(r_t(\theta), 1 - \epsilon, 1 + \epsilon)\hat{A}_t\Big), & t \notin \mathbb{I}[H_t < \tau_{\rho_1}^\mathcal{B}], \\
    0, & t \in \mathbb{I}[H_t < \tau_{\rho_1}^\mathcal{B}].
    \end{cases}
\end{equation}
where $H_t = -\Sigma_{v\in \mathcal{V}}\pi_{\theta_{old}}(v|q,o_{<t}){\rm log}\pi_{\theta_{old}}(v|q,o_{<t})$ denotes the entropy of the $t$-th token, $\mathbb{I}[\cdot]$ is the indication set that chooses to include $t$ if $H_t < \tau_{\rho_1}^\mathcal{B}$, $\rho_1 \in (0,1]$ is a predefined ratio specifying the proportion of low-entropy tokens to be discarded, and $\tau_{\rho_1}^\mathcal{B}$ is the corresponding entropy threshold within the batch $\mathcal{B}$ ($\mathcal{B}$ represents all tokens in one response, not a batch of prompts). Thus, tokens whose entropy $H_t < \tau_{\rho_1}^\mathcal{B}$ are detached from the policy gradient updates. For token separation, we first sort all tokens by entropy, following the 80/20 rule, the 80\% tokens with lower entropy will be considered as low-entropy tokens, the remaining 20\% tokens are considered as high-entropy tokens. In the subsequent description, we also employ the same way to divide low- and high-entropy tokens.

\textbf{Group 2 (hard prompt, correct answer, high-entropy token).} As evidenced in \cite{80/20-rule}, high-entropy tokens are usually ``forking'' tokens that function as pivotal decision points to determine the reasoning trajectory among multiple potential pathways, thereby facilitating exploration. Moreover, the Entropy Mechanism \cite{Entropy-Mechanism} also indicates that the tokens with high entropy and high advantage would increase policy entropy. Therefore, to guarantee the exploratory capabilities of the model in hard problems, it is crucial to prioritize these high-entropy tokens. As analyzed in Sec.\ref{sec:preliminaries}, enhancing the impact of high-entropy tokens on policy learning can be achieved by either preventing objective clipping or increasing gradient weights. In DAPO \cite{DAPO}, the authors propose to utilize a larger $\epsilon_{high}$ (\emph{e.g.}, 0.28) as the upper clipping bound to ensure more tokens will not be clipped. We extend this by opting to keep all high-entropy tokens from being clipped, but one issue is that excessively large non-clipped $r_t(\theta)$ will cause training instability. To address this, we propose the following fixed importance ratio-based non-clipping optimization objective:
\begin{equation}
\label{eq:case2_objective1}
    J_t(\theta) = (1+\epsilon) \cdot \frac{\pi_\theta(o_t|q,o_{<t})}{SG[\pi_\theta(o_t|q,o_{<t})]} \cdot \hat{A}_t, \quad\quad {\rm if} \; r_t(\theta) > 1 + \epsilon.
\end{equation}
where $SG[\cdot]$ denotes the \emph{stop-gradient} operation. We can see that in this formula, when $r_t(\theta) > 1+\epsilon$, we fix the importance ratio at $1+\epsilon$ and preserve the gradient (\emph{i.e.}, $\pi_\theta(o_t|q,o_{<t})$ is differentiable).

For increasing the gradient weights, also according to the analysis in Sec.\ref{sec:preliminaries}, we can increase the $r_t(\theta)$ or $\hat{A}_t$. In this work, we focus on the importance ratio $r_t(\theta)$\footnote{$\hat{A}_t$ still relies on the outcome advantage assigned to all tokens. Investigating dense credit assignment is beyond the scope of this paper.} and employ the reciprocal of $r_t(\theta)$. For $r_t(\theta) > 1$, the value becomes smaller after being the reciprocal, so only $r_t(\theta) < 1-\epsilon$ becomes to its reciprocal:
\begin{equation}
\label{eq:case2_objective2}
     J_t(\theta) = \frac{\pi_{\theta_{old}}(o_t|q,o_{<t})}{SG[\pi_\theta(o_t|q,o_{<t})]}\cdot \frac{\pi_\theta(o_t|q,o_{<t})}{SG[\pi_\theta(o_t|q,o_{<t})]}  \cdot \hat{A}_t, \quad\quad {\rm if} \; r_t(\theta) < 1 - \epsilon.
\end{equation}
where $\frac{\pi_{\theta_{old}}(o_t|q,o_{<t})}{SG[\pi_\theta(o_t|q,o_{<t})]}$ will also be clipped up to $1 + \epsilon$. 

\textbf{Group 3 (hard prompt, wrong answer, low-entropy token).} In wrong answers, a phenomenon is that one wrong response usually consists of a majority of neutral or correct actions and only a minority of tokens are the root cause of errors \cite{BAPO}. These neutral or correct tokens typically exist in low-entropy tokens and play the role of executing reasoning steps along the established path instead of triggering new reasoning transitions \cite{BAPO}. Therefore, for these tokens, due to their presence in wrong responses, we still need to penalize them, but we decide not to further increase the penalty. Then, we keep the original optimization objective (Eq.(\ref{eq:base_objective})) for these tokens. 

\textbf{Group 4 (hard prompt, wrong answer, high-entropy token).} In the context of wrong answers, high-entropy tokens can still be considered as ``forking'' tokens that are typically the logical connection or transition words to induce erroneous reasoning directions (see more analysis in Appendix \ref{sec:token_entropy_analysis}). Since these tokens introduce reasoning transitions, they are more likely to constitute the root cause of errors \cite{BAPO}. Consequently, we should penalize these tokens more heavily. When $\hat{A}_t < 0$, these tokens are prone to being clipped by the lower clipping bound $1 - \epsilon$. Thus, following the analysis in \textbf{Group 2}, we employ the following fixed importance ratio-based non-clipping optimization format (differently, fixed ratio at $1-\epsilon$):
\begin{equation}
\label{eq:case4_objective}
    J_t(\theta) = (1-\epsilon) \cdot \frac{\pi_\theta(o_t|q,o_{<t})}{SG[\pi_\theta(o_t|q,o_{<t})]} \cdot \hat{A}_t, \quad\quad {\rm if} \; r_t(\theta) < 1 - \epsilon.
\end{equation}

\textbf{Group 5 (easy prompt, correct answer, low-entropy token).} For easy problems, we should encourage exploitation and also don't want unnecessarily long reasoning chains, as more complex reasoning instead may cause the model to overthink and complicate simple problems \cite{DACE}, resulting in errors. Therefore, for reasoning chains that have successfully reached the correct answer, we prioritize direct exploitation over the continued exploration of more complex reasoning paths. Then, we directly learn low-entropy tokens for exploitation, but we also don't want the model to be overly biased to them. To this end, we don't further increase the $r_t(\theta)$ and keep the original optimization objective (Eq.(\ref{eq:base_objective})) for these tokens.

\textbf{Group 6 (easy prompt, correct answer, high-entropy token).} High-entropy tokens often act as logical connectors across sentences, such as ``wait'', ``however'', and ``unless'', etc. For easy problems, we don't want to continue generating more complex reasoning paths. The more logical connector tokens generated by the model, the more likely to lead to complex reasoning chains. Moreover, a phenomenon is that the reasoning model tends to retain the entropy patterns of the base model \cite{80/20-rule} (\emph{i.e.}, after RL, tokens that cause complex reasoning chains may still appear in the answers to easy problems). Therefore, we should reduce the frequency of high-entropy tokens appearing in the responses to easy problems. Analogous to the optimization objective in \textbf{Group 1}, in this case, we opt to clip a fraction of the highest-entropy tokens out from policy gradient updates. We define the token-wise objective as follows:
\begin{equation}
\label{eq:case6_objective}
    J_t(\theta) = \begin{cases}
    {\rm min}\Big(r_t(\theta)\hat{A}_t,{\rm clip}(r_t(\theta), 1 - \epsilon, 1 + \epsilon)\hat{A}_t\Big), & t \notin \mathbb{I}[H_t \geq \tau_{\rho_2}^\mathcal{B}], \\
    0, & t \in \mathbb{I}[H_t \geq \tau_{\rho_2}^\mathcal{B}].
    \end{cases}
\end{equation}
 where $\rho_2 \in (0,1]$ is a predefined ratio specifying the top proportion of high-entropy tokens to be detached, and $\tau_{\rho_2}^\mathcal{B}$ is the corresponding entropy threshold within the batch $\mathcal{B}$.

\textbf{Group 7 (easy prompt, wrong answer, low-entropy token).} As analyzed in \textbf{Group 3}, low-entropy tokens of one wrong response are usually composed of neutral or correct actions that are not the root cause of errors. Since the advantages of these tokens are already negative, following \textbf{Group 3}, we keep the original optimization objective (Eq.(\ref{eq:base_objective})) for these tokens.

\textbf{Group 8 (easy prompt, wrong answer, high-entropy token).} Analogous to the analysis in \textbf{Group 4} and \textbf{Group 6}, high-entropy ``forking'' tokens can still be regarded as having induced wrong reasoning directions, resulting in the final wrong answer. Thus, for easy problems, we don't want too many ``forking'' tokens and should penalize them more heavily. Then, we opt to adopt the token-wise optimization objective (Eq.(\ref{eq:case4_objective})) outlined in \textbf{Group 4}.

Finally, according to the objective formulation, we consolidate the optimization objectives of our HTPO in the following formulation:

\begin{equation}
    J_t(\theta) = \begin{cases}
    0, \quad\quad\quad\quad\quad\quad\quad\quad\quad\quad\quad\quad\quad\quad\quad\quad\;\; o_t \in (\textbf{G1} \land H_t < \tau_{\rho_1}^\mathcal{B}) \; \cup \; (\textbf{G6} \land H_t \geq \tau_{\rho_2}^\mathcal{B}),\\
    
    (1+\epsilon)/(r^{-1}_t) \cdot \frac{\pi_\theta(o_t|q,o_{<t})}{SG[\pi_\theta(o_t|q,o_{<t})]} \cdot \hat{A}_t, \quad\quad\;\; o_t \in \textbf{G2} \land (r_t(\theta) > 1 + \epsilon) / (r_t(\theta) < 1 - \epsilon), \\
    
    (1 - \epsilon) \cdot \frac{\pi_\theta(o_t|q,o_{<t})}{SG[\pi_\theta(o_t|q,o_{<t})]} \cdot \hat{A}_t, \quad\quad\quad\quad\quad\quad\quad\quad\quad  o_t \in (\textbf{G4} \cup \textbf{G8}) \land (r_t(\theta) < 1 - \epsilon), \\
    
    {\rm min}\Big(r_t(\theta)\hat{A}_t,{\rm clip}(r_t(\theta), 1 - \epsilon, 1 + \epsilon)\hat{A}_t\Big), \quad\quad\quad\quad\quad\quad\quad\quad\quad\quad\quad\quad\quad\quad\;\; \textbf{others}.
    \end{cases}
\end{equation}

where $\textbf{G}$ is the abbreviation of \textbf{Group}, $r_t$ denotes $SG[r_t(\theta)]$. The overall algorithmic procedure of our HTPO is presented in Alg.\ref{algorithm1} in Appendix \ref{sec:whole_algorithm}. In Appendix \ref{sec:theory}, we further provide a theoretical analysis of our HTPO (Theorem \ref{thm:consistency}, Inter-Group Gradient Consistency), which states that the group-specific optimization objectives in HTPO do not generate conflicting training signals, and the gradient directions are consistent.

\textbf{Difficulty Estimation.} To quantify prompt difficulty, it is essential to recognize that difficulty is not an absolute property of a prompt but is relative to the policy's current capabilities. Therefore, the difficulty measure for a prompt $q$ should be estimated online with respect to the current policy $\pi_\theta$. To this end, we employ the failure rate as a proxy metric. This provides a practical, policy-related measure of prompt difficulty, which we approximate by calculating the error rate in the group of $G$ responses of each prompt:
\begin{equation}
\label{eq:difficulty_level}
    {\rm diff}(q;\pi_\theta) = 1 - \frac{1}{G}\sum_{i=1}^G\mathbb{I}[{\rm verify}(o_i) = 1].
\end{equation}
A higher ${\rm diff}(q;\pi_\theta)$ value indicates the prompt is more challenging for the current policy model. Accordingly, by introducing a difficulty threshold $\tau_{diff}$, we divide prompts whose ${\rm diff}(q;\pi_\theta) > \tau_{diff}$ as hard prompts, and easy prompts satisfy ${\rm diff}(q;\pi_\theta) \leq \tau_{diff}$.

\section{Experiments}

\subsection{Experimental Setup}
\label{sec:experimental_setup}

\textbf{Datasets and Models.} We employ DAPO-Math-17K \cite{DAPO} as our RL training dataset, owing to its high quality and widespread adoption. For evaluation, we comprehensively compare our method with other state-of-the-art RL methods on five standard mathematical reasoning benchmarks, including AIME'24 \cite{aime24}, AIME'25 \cite{aime25}, AMC'23 \cite{amc23}, Minerva, and OlympiadBench \cite{olympiadbench}. These benchmarks are widely used to assess reasoning capabilities and compare RL algorithms. For the experimental models, we employ the Qwen3-8B series, as this parameter scale is the most commonly adopted and computationally affordable choice in academic papers. To evaluate the robustness of our method, we perform experiments on both the pre-trained Qwen3-8-Base and the instruction-finetuned Qwen3-8B \cite{Qwen3}, representing two distinct application scenarios for RL algorithms.

\textbf{Implementation Details.} We conduct RL training based on VeRL \cite{verl} and follow the recipe of DAPO \cite{DAPO}. For fair comparisons, we employ the same techniques as recommended by DAPO: clip-higher, dynamic sampling, token-level policy gradient loss, and overlong reward shaping. We also align our hyperparameters with DAPO, setting $\epsilon_{low} = 0.2$, $\epsilon_{high} = 0.28$, the maximum response length to 20480, and the cache length to 4096. We use a training batch size of 128 and a mini-batch size of 32, resulting in 4 gradient updates per rollout step. Each prompt generates 16 responses with the decoding temperature of 1.0 and top-p of 1.0. The learning rate and weight decay are set to $1e^{-6}$ and 0.1, respectively. Following DAPO, we also omit both the KL divergence and entropy losses. Regarding HTPO-specific hyperparameters, the default configuration uses the clipping ratios $\rho_1 = 0.006$, $\rho_2 = 0.02$, and the difficulty threshold $\tau_{diff} = 0.5$. For evaluation, we use a temperature of 1.0 and top-p of 0.7. A complete list of hyperparameters is provided in Tab.\ref{tab:hyperparameters} in Appendix \ref{sec:hyperparameters_sup}.

\textbf{Baselines.} We compare HTPO against an improved GRPO \cite{GRPO} baseline with the dynamic sampling trick \cite{DAPO}, as well as the recently proposed strong RL methods. These include: DAPO \cite{DAPO}, a strong improved baseline over GRPO; GSPO \cite{GSPO}, which performs optimization based on the sequence-level importance ratios and clipping; SAPO \cite{SAPO}, which replaces hard clipping with a smooth, temperature-controlled gate to enable more informative and stable updates; 80/20-Rule \cite{80/20-rule}, which performs optimization only on 20\% high-entropy tokens; BAPO \cite{BAPO}, which dynamically adjusts clipping bounds to balance positive
and negative signals. For all methods, we report results based on \textbf{Mean@32}, which can mitigate the impact of sampling randomness.


\subsection{Main Results}

\begin{table*}[ht]
\centering
\caption{Performance comparison on mathematical reasoning benchmarks. All the results are Mean@32 accuracies and averaged over 3 random seeds. GRPO$^\dagger$ is our reimplemented GRPO baseline with the dynamic sampling trick. APG/RPG denotes the absolute/relative performance gain over DAPO. The best scores are marked in \textbf{bold}, and second best scores are \underline{underlined}.}
\label{tab:main_results}
\resizebox{\linewidth}{!} {
\begin{tabular}{ccccccc}
\toprule[0.5mm]
 \textbf{Method} & \textbf{AIME'24} & \textbf{AIME'25} & \textbf{AMC'23} & \textbf{Minerva} & \textbf{OlympiadBench} & \textbf{Avg.} \\
\midrule
\multicolumn{7}{c}{\textbf{\emph{Qwen3-8B-Base}}} \\
\midrule
 GRPO$^\dagger$  & 31.1$\pm$0.60 & 22.9$\pm$0.40 & 70.2$\pm$0.57 & 28.5$\pm$0.22 & 39.8$\pm$0.15 & 38.5 \\
 DAPO & 32.9$\pm$0.52 & 23.7$\pm$0.22 & 75.9$\pm$0.42 & 29.4$\pm$0.09 & \underline{45.7}$\pm$0.15 & 41.5 \\
 
 GSPO & 39.2$\pm$0.55 & \underline{30.3}$\pm$0.58 & \underline{82.0}$\pm$0.46 & \textbf{30.3}$\pm$0.30 & 44.6$\pm$0.31 & 45.3\\
 
 SAPO & \underline{40.3}$\pm$0.12 & 28.4$\pm$0.65 & 81.9$\pm$0.45 & \underline{30.1}$\pm$0.12 & 44.8$\pm$0.10 & 45.1\\
 80/20-Rule & 34.3$\pm$0.65 & 25.9$\pm$0.12 & 77.2$\pm$0.94 & 27.6$\pm$0.17 & 45.1$\pm$0.26 & 42.0\\
 
 BAPO & 35.6$\pm$0.42 & 24.7$\pm$0.34 & 77.8$\pm$0.52 & 28.3$\pm$0.12 & 42.3$\pm$0.29 & 41.7\\
 \midrule
 \rowcolor{tableColor} \textbf{HTPO (ours)} & \textbf{41.5}$\pm$0.50 & \textbf{30.4}$\pm$0.14 & \textbf{83.0}$\pm$0.54 &  \textbf{30.3}$\pm$0.05 & \textbf{47.6}$\pm$0.23 & \textbf{46.6}\\
 \rowcolor{tableColor} \textbf{APG (vs. DAPO)} & +8.6 & +6.7 & +7.1 & +0.9 & +1.9 &  +5.1\\
 \rowcolor{tableColor} \textbf{RPG (vs. DAPO)} & +26.1\% & +28.3\% & +9.4\% & +3.1\% & +4.2\% & +12.3\%\\
 \midrule
\multicolumn{7}{c}{\textbf{\emph{Qwen3-8B-Instruct}}} \\
\midrule
 GRPO$^\dagger$  & 71.3$\pm$0.94 & 58.6$\pm$0.55 & 92.2$\pm$0.52 & 34.8$\pm$0.05 & 53.5$\pm$0.19 & 62.1\\
 DAPO & 73.2$\pm$0.63 & \underline{64.7}$\pm$0.77 & 93.8$\pm$1.06 & 33.8$\pm$0.09 & 59.3$\pm$0.35 & 65.0\\
 GSPO & 72.9$\pm$0.71 & 63.7$\pm$0.49 & \underline{94.1}$\pm$0.24 & \underline{35.9}$\pm$0.22 & 58.4$\pm$0.17 & 65.0\\
 SAPO & \underline{74.0}$\pm$0.67 & 63.6$\pm$0.45 & 94.0$\pm$0.12 & 33.1$\pm$0.08 & 59.9$\pm$0.21 & 64.9\\
 80/20-Rule & 72.9$\pm$0.22 & 63.4$\pm$0.68 & 93.8$\pm$0.52 & 34.3$\pm$0.08 & \underline{60.0}$\pm$0.09 & 64.9\\
 BAPO & 72.8$\pm$0.75 & 59.2$\pm$0.80 & 93.3$\pm$0.22 & \textbf{36.5}$\pm$0.17 & 56.9$\pm$0.12 & 63.7 \\
 \midrule
\rowcolor{tableColor} \textbf{HTPO (ours)} & \textbf{75.2}$\pm$0.53 & \textbf{66.0}$\pm$0.45 & \textbf{96.3}$\pm$0.33 & \textbf{36.5}$\pm$0.12 & \textbf{60.9}$\pm$0.15 & \textbf{67.0} \\
 \rowcolor{tableColor} \textbf{APG (vs. DAPO)} & +2.0 & +1.3 & +2.5 & +2.7 & +1.6 &  +2.0\\
 \rowcolor{tableColor} \textbf{RPG (vs. DAPO)} & +2.7\% & +2.0\% & +2.7\% & +8.0\% & +2.7\% & +3.1\%\\
\bottomrule[0.5mm]
\end{tabular}}
\end{table*}

\textbf{Comparative performance analysis.} As presented in Tab.\ref{tab:main_results}, our HTPO achieves superior results and consistently outperforms these competing RL methods, most notably surpassing both the GRPO$^\dagger$ and DAPO baselines substantially across all 5 benchmarks. The efficacy of our method is particularly pronounced when applied to pre-trained base models. For instance, the absolute performance gain (APG) over DAPO on the AIME 2024 and AIME 2025 benchmarks reaches a remarkable +8.6 and +6.7, respectively. This suggests that our token-level divide-and-conquer strategy can bring more substantial improvements for models with weaker reasoning abilities (\emph{i.e.}, base models have greater untapped reasoning potential as they have not been heavily optimized via instruction-tuning). For the Qwen3-8B-Instruct model, although the margins of improvement are slightly narrower due to stronger baseline performance, HTPO still achieves consistent gains, proving its robustness and effectiveness across the two typical RL application scenarios. 


\textbf{Training Dynamics.} Analyzing the training process can help to reveal the mechanism behind HTPO’s performance gains. In Fig.\ref{fig:training_dynamics}, we illustrate the training dynamics of the Qwen3-8B-Base model regarding entropy, validation accuracy on AIME'24, and response length. It can be found that GRPO suffers from entropy collapse, while the entropy is significantly increased in DAPO. However, in DAPO, the entropy merely increases blindly without effectively translating into improved policy performance. Both the over-exploitation in GRPO and over-exploration in DAPO lead to suboptimal policy improvements. In contrast, our HTPO neither suffers from entropy collapse nor indiscriminately increases entropy. Its balanced entropy profile suggests a well-controlled exploration-exploitation trade-off and translates to consistent improvements: HTPO achieves steady growth in validation scores, ultimately outperforming all baselines. In addition, compared to GRPO, the model trained by HTPO is more efficient (the response length is shorter, but the accuracy is significantly higher). These results demonstrate that HTPO provides more reliable and effective optimization compared to standard GRPO-style methods that treat all tokens uniformly. In Appendix \ref{sec:additional_training_dynamics}, we provide more results about training dynamics.

\begin{figure*}[ht]
    \centering
    \includegraphics[width=1.0\linewidth]{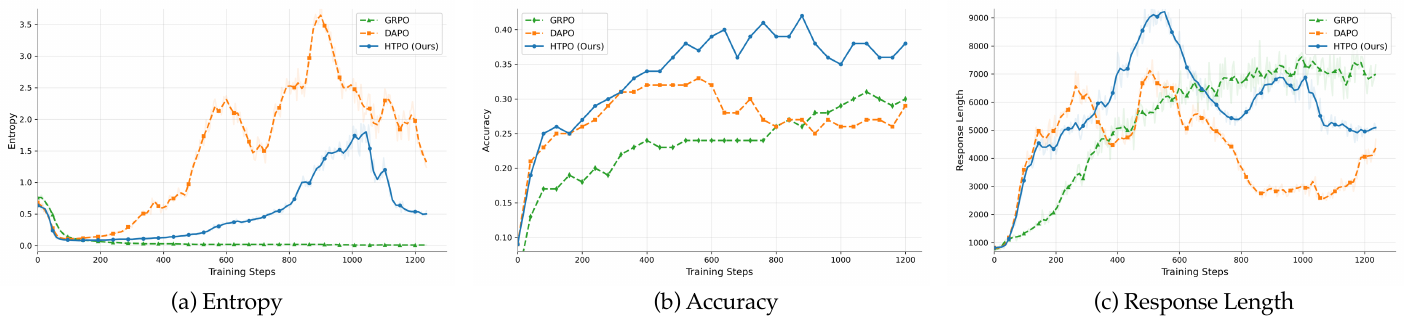}
    \caption{Training dynamics of the Qwen3-8B-Base model. The plots show the evolution of (a) policy entropy, (b) evaluation accuracy, and (c) response length. HTPO exhibits a well-controlled exploration-exploitation trade-off, characterized by no entropy collapse and better evaluation accuracy.}
    \label{fig:training_dynamics}
\end{figure*}

\subsection{Further Analysis}

\textbf{Scaling Test-Time Computation.} We next investigate how models trained with HTPO perform when allocated more computational resources at test time. As shown in Fig.\ref{fig:test_time_scale}, the HTPO-trained model (based on Qwen3-8B-Base) maintains a consistent and significant performance advantage
over the DAPO baseline on AIME25. Notably,
the performance gap widens as the number of samples increases, growing from a +5.5 point lead at pass@16 to a +11.4 point lead at pass@128. This outcome validates the adaptive strategy in our HTPO effectively fosters robust exploration (\emph{i.e.}, evaluated by large Pass@k) without sacrificing precision (the model maintains accuracy at Pass@1).

\begin{figure}[htbp]
    \centering
    \begin{minipage}{0.48\textwidth}
        \centering
        \includegraphics[width=\linewidth]{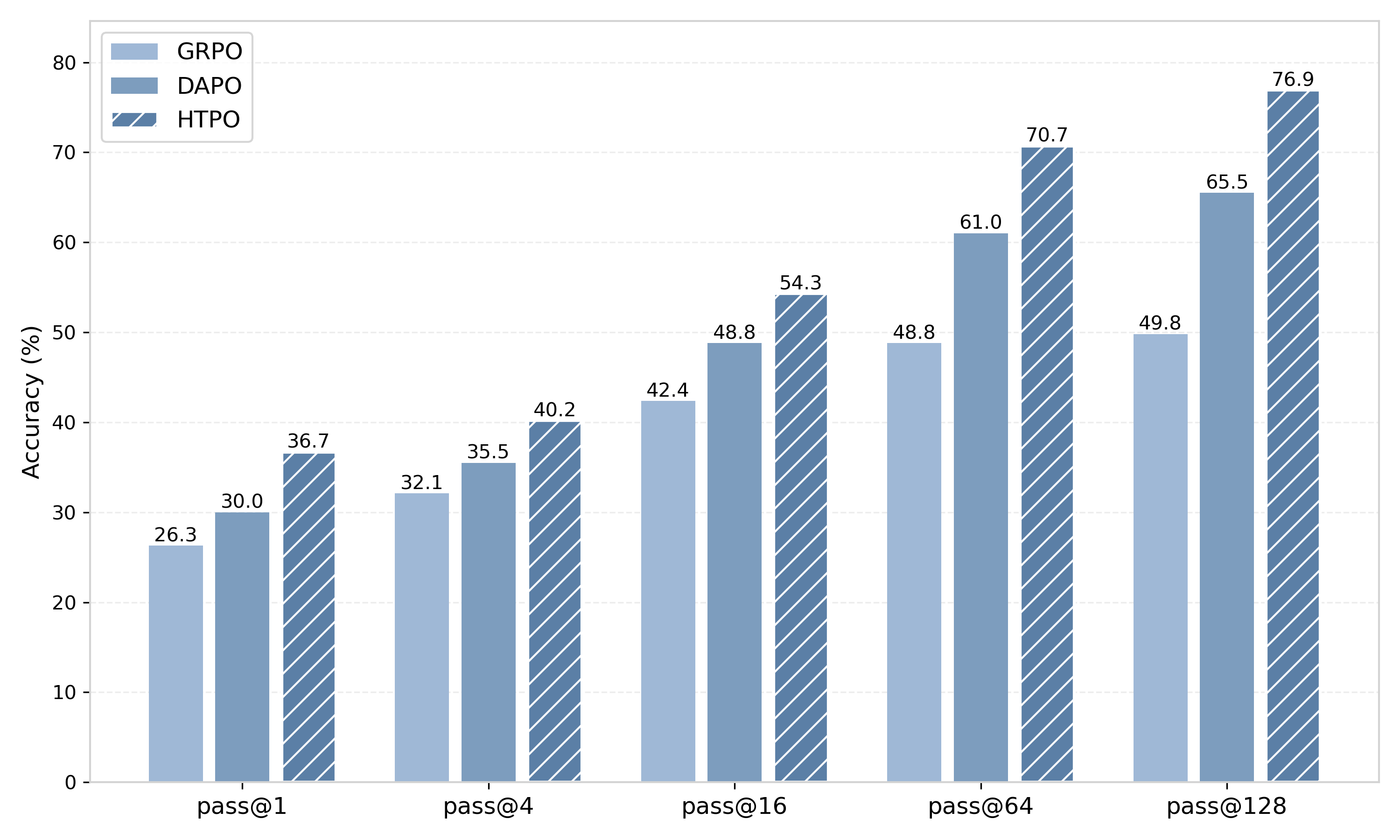}
        \captionof{figure}{Impact of scaling test-time compute on the AIME'25 benchmark (Pass@k). More results are provided in Fig.\ref{fig:test_time_scale_sup} in Appendix \ref{sec:additional_test_time_scale}.}
        \label{fig:test_time_scale}
    \end{minipage}%
    \hfill 
    \begin{minipage}{0.48\textwidth}
        \centering
         \resizebox{\linewidth}{!}{\begin{tabular}{ccccccc}
    \toprule[0.5mm]
     \multirow{2}{*}{\textbf{Method}} & \multicolumn{3}{c}{\textbf{LiveCodeBench}} & \textbf{ALFWorld} & \multicolumn{2}{c}{\textbf{WebShop}} \\
    \cmidrule(lr){2-4} \cmidrule(lr){6-7}
    & Pass@1 & Pass@5 & Pass@10 & Succ. & Score & Succ. \\
    \midrule
     GRPO$^\dagger$  & 24.2 & 30.2 & 31.9 & 59.3 & 55.4 & 31.4 \\
     DAPO & 26.8 & 32.8 & 34.3 & 60.0 & 57.3 & 31.8\\
     \midrule
     \rowcolor{tableColor} \textbf{HTPO (ours)} & 28.7 & 34.7 & 36.1 & 61.4 & 58.7 & 33.8 \\
     \rowcolor{tableColor} \textbf{APG (vs. DAPO)} & +1.9 & +1.9 & +1.8 & +1.4 & +1.4 & +2.0\\
     \rowcolor{tableColor} \textbf{RPG (vs. DAPO)} & +7.1\% & +5.8\% & +5.2\% & +2.3\% & +2.4\% & +6.3\%\\
    \bottomrule[0.5mm]
    \end{tabular}}
        \captionof{table}{Performance comparison on code and agentic reasoning benchmarks. ``Succ.'' denotes success rate. For WebShop, we report both the task score and success rate (\%). All results are based on Qwen3-8B. Qwen3-8B-Base is excluded from this comparison due to its weak instruction-following capabilities, which prevent it from working properly on these tasks.} 
        \label{tab:other_datasets}
    \end{minipage}
\end{figure}

\textbf{Generalization ability to other domains.}  
As outlined in Sec.\ref{sec:experimental_setup}, we evaluated our method primarily on mathematical reasoning benchmarks. Here, we further evaluate whether HTPO can still surpass DAPO on out-of-distribution test sets, including code tasks (LiveCodeBench \cite{livecodebench}) and agentic tasks (ALFWorld \cite{ALFworld}, WebShop \cite{WebShop}). The results are illustrated in Tab.\ref{tab:other_datasets}, using the same setup described in Sec.\ref{sec:experimental_setup}. From these results, we observe that HTPO still effectively outperforms vanilla DAPO on the out-of-distribution test datasets. This finding suggests that high-entropy tokens may be associated with the generalization capabilities of reasoning models. Encouraging exploration on hard problems and amplifying the gradient contributions of high-entropy tokens could potentially enhance the generalization ability of reasoning models.

\subsection{Ablation Studies}

\textbf{The contribution of the hierarchical groups to performance improvement.} In our HTPO algorithm, we hierarchically divide response tokens into eight functional groups and design tailored optimization objectives for each group. Therefore, a key question about our method is the effectiveness of these group-specific objectives. We conduct relevant ablation experiments and report the results in Tab.\ref{tab:method_ablation}. For each group, we apply the optimization objectives designed in Sec.\ref{sec:method}, while retaining the original optimization objective (Eq.(\ref{eq:base_objective})) in the other groups. The results confirm that tailoring the optimization objectives according to the specific roles of each token group yields superior results compared to the original objective. Furthermore, these distinct groups exhibit complementary characteristics. When integrated into HTPO, we can achieve the best performance. In Appendix \ref{sec:group_policy_entropy}, we further illustrate the entropy evolution patterns of different groups in Fig.\ref{fig:group_entropy_ablation}, which can intuitively demonstrate the specific effects of these group-specific optimization objectives.

\begin{wraptable}{r}{0.35\textwidth}
\centering
\caption{Method ablation studies. For \textbf{Group 3/5/7}, the optimization objectives are the same as the original (Eq.(\ref{eq:base_objective})), so the corresponding experiments are missing.}
\label{tab:method_ablation}
\resizebox{\linewidth}{!} {
\begin{tabular}{c|ccc}
\toprule[0.5mm]
  \textbf{Groups} & \textbf{AIME'24} & \textbf{AIME'25} & \textbf{Avg.}\\
\midrule

   Baseline & 33.3 & 23.8 &  28.6 \\
 
   Group 1  & 35.4 & 25.7 &  30.6 \\

   Group 2  & 35.7 & 25.1 &  30.4 \\

   Group 4  & 36.7 & 28.5 &  32.6 \\
   
   Group 6  & 35.8 & 26.0 & 30.9 \\

   Group 8  & 36.4 & 25.4 &  30.9 \\

   HTPO (ours) & 41.7 & 30.2 & 36.0  \\

\bottomrule[0.5mm]
\end{tabular}}
\end{wraptable}
\textbf{Hyperparameter ablation studies.} In Tab.\ref{tab:hyperparameter_ablation}(a), we present ablation experiments on the clipping ratios used in the optimization objectives of \textbf{Group 1} and \textbf{Group 6}. Both results indicate that excessively high or low clipping ratios lead to performance degradation. This finding confirms that a conservative ratio cannot effectively achieve the goal of detaching some specific tokens, whereas an overly aggressive ratio risks compromising important learning signals. In Tab.\ref{tab:hyperparameter_ablation}(b), we show the ablation experiments on the difficulty threshold $\tau_{diff}$. A threshold of $\tau_{diff} = 0.0$ forces HTPO to always treat problems as ``hard'', thereby always encouraging exploration. Conversely, $\tau_{diff} = 1.0$ forces HTPO to always treat problems as ``easy'', thus always promoting exploitation. The results clearly show that fixed
strategies (pure exploitation or exploration) are both suboptimal. The best performance is achieved with intermediate thresholds, with $\tau_{diff} = 0.5$ achieving
the highest average accuracy of 36.0\%. This confirms that the power of HTPO lies not in forcing a single behavior, but in its ability to dynamically switch between exploration and exploitation based on task difficulty.

\begin{table}[ht]
\caption{Hyperparameter ablation studies. (a) Ablation experiments on the clipping ratios ($\rho_1$, $\rho_2$). (b) Ablation experiments on the difficulty threshold ($\tau_{diff}$).}
\label{tab:hyperparameter_ablation}
    \begin{subtable}{0.55\linewidth}
    \centering
    \resizebox{!}{1.2cm}{\begin{tabular}{c|ccc|c|ccc}
\toprule[0.5mm]
  \textbf{$\rho_1$} & \textbf{AIME'24} & \textbf{AIME'25} & \textbf{Avg.} & \textbf{$\rho_2$} & \textbf{AIME'24} & \textbf{AIME'25} & \textbf{Avg.} \\
\midrule

   0.002 & 33.4 & 25.2 & 29.3 & 0.006 & 32.2 & 24.1 & 28.2 \\
    
   0.006 & 35.4 & 25.7 & 30.6 & 0.01 & 35.7 & 25.5 & 30.6 \\

   0.01  & 32.9 & 24.6 & 28.8 & 0.015  & 34.9 & 25.3 & 30.1 \\
 
   0.02  & 33.8 & 23.4 & 28.6 & 0.02  & 35.8 & 26.0 & 30.9 \\

   0.05  & 33.2 & 25.1 & 29.2 & 0.03  & 33.2 & 24.9 & 29.1 \\

    0.1  & 33.0 & 24.7 & 28.9 & 0.05  & 34.6 & 25.2 & 29.9 \\
    
   0.2  & 32.8 & 23.3 & 28.1 & 0.1  & 33.9 & 24.8 & 29.4 \\

   0.3  & 31.6 & 24.1 & 27.9 & 0.2  & 33.1 & 23.9 & 28.5 \\
   
\bottomrule[0.5mm]
\end{tabular}}
\caption{}
    \end{subtable}
    \begin{subtable}{0.45\linewidth}
    \centering
    \resizebox{!}{1.2cm}{\begin{tabular}{c|ccc}
\toprule[0.5mm]
  \textbf{Treshold ($\tau_{diff}$)} & \textbf{AIME'24} & \textbf{AIME'25} & \textbf{Avg.}\\
\midrule
 
   0.0  & 38.8 & 28.5 &  33.7 \\

   0.3  & 39.9 & 28.5 &  34.2 \\

   0.4  & 40.5 & 28.2 &  34.4 \\

   0.5  & 41.7 & 30.2 &  36.0 \\

   0.6  & 40.1 & 28.0 &  34.1 \\

   0.7  & 39.6 & 28.1 &  33.9 \\

   1.0 & 39.2 & 27.9 &   33.6 \\

\bottomrule[0.5mm]
\end{tabular}}
\caption{}
    \end{subtable}
\end{table}

\section{Conclusion}

In this paper, motivated by the insight that different tokens play distinct roles during the reasoning process, we identify a critical limitation in conventional RL methods: the uniform treatment of all tokens under a shared optimization objective. To address this, we innovatively divide the response tokens into eight functional groups from three aspects (\emph{i.e.}, prompt difficulty, answer correctness, and token entropy) and design specialized optimization objectives to facilitate the effective execution of each token's expected functionality. Extensive experiments demonstrate that our HTPO algorithm substantially outperforms strong DAPO and GRPO baselines, confirming the efficacy of our token-level divide-and-conquer idea. Analogous to research on dense credit assignment, we hope this work can provide new inspiration for the LLM RL community: advocating for the design of optimal optimization objectives tailored for specific token roles. The limitations and social impacts of our work are discussed in Appendix \ref{sec:limitation_social_impacts}.

\section*{Acknowledgments}

We gratefully acknowledge Xiaohongshu Inc. for providing computing resources that greatly supported the completion of this work. This work was also supported in part by the National Natural
Science Fund of China (No.62371295), the Shanghai
Jiao Tong University AI for Engineering Initiative
(No.WH410263001/001), and the Science and
Technology Commission of Shanghai Municipality
(No.22DZ2229005).

\bibliographystyle{plain} 
\bibliography{ref}

@article{REINFORCE,
author = {Ronald J. Williams},
title = {Simple Statistical Gradient-Following Algorithms for Connectionist Reinforcement Learning},
journal = {Mach. Learn},
year = 1992}

@article{PPO,
author = {John Schulman and Filip Wolski and Prafulla Dhariwal and Alec Radford and Oleg Klimov},
title = {Proximal policy optimization algorithms},
journal = {arXiv preprint arXiv: 1707.06347},
year = 2017}

@article{GRPO,
author = {Zhihong Shao and Peiyi Wang and Qihao Zhu and Runxin Xu and Junxiao Song and Xiao Bi and Haowei Zhang and Mingchuan Zhang and Y.K. Li and Y. Wu and Daya Guo},
title = {DeepSeekMath: Pushing the Limits of Mathematical
Reasoning in Open Language Modelss},
journal = {arXiv preprint arXiv: 2402.03300},
year = 2024}

@article{Entropy-Mechanism,
author = {Ganqu Cui and Yuchen Zhang and Jiacheng Chen and Lifan Yuan and Zhi Wang and Yuxin Zuo and Haozhan Li and Yuchen Fan and Huayu Chen and Weize Chen and Zhiyuan Liu and Hao Peng and Lei Bai and Wanli Ouyang and Yu Cheng and Bowen Zhou and Ning Ding},
title = {The Entropy Mechanism of Reinforcement Learning for Reasoning Language Models},
journal = {arXiv preprint arXiv: 2505.22617},
year = 2025}

@article{DACE,
author = {Ang Li and Zhihang Yuan and Yang Zhang and Shouda Liu and Yisen Wang},
title = {Know When to Explore: Difficulty-Aware Certainty as a Guide for LLM Reinforcement Learning},
journal = {arXiv preprint arXiv: 2509.00125},
year = 2025}

@article{80/20-rule,
author = {Shenzhi Wang and Le Yu and Chang Gao and Chujie Zheng and Shixuan Liu and Rui Lu and
Kai Dang and Xionghui Chen and Jianxin Yang and Zhenru Zhang and Yuqiong Liu and An Yang and Andrew Zhao and Yang Yue and Shiji Song and Bowen Yu and Gao Huang and Junyang Lin},
title = {Beyond the 80/20 Rule: High-Entropy Minority Tokens Drive Effective Reinforcement Learning for LLM Reasoning},
journal = {arXiv preprint arXiv: 2506.01939},
year = 2025}

@article{DAPO,
author = {ByteDance Seed},
title = {DAPO: An Open-Source LLM Reinforcement Learning System at Scale},
journal = {arXiv preprint arXiv: 2503.14476},
year = 2025}

@article{VAPO,
author = {Yu Yue and Yufeng Yuan and Qiying Yu and Xiaochen Zuo and Ruofei Zhu and Wenyuan Xu and Jiaze Chen and Chengyi Wang and TianTian Fan and Zhengyin Du and Xiangpeng Wei and Xiangyu Yu and Gaohong Liu and Juncai Liu and Lingjun Liu and Haibin Lin and Zhiqi Lin and Bole Ma and Chi Zhang and Mofan Zhang and Wang Zhang and Hang Zhu and Ru Zhang and Xin Liu and Mingxuan Wang and YonghuiWu and Lin Yan},
title = {Vapo: Efficient and reliable reinforcement learning for advanced reasoning tasks},
journal = {arXiv preprint arXiv: 2504.05118},
year = 2025}

@article{GSPO,
author = {Chujie Zheng and Shixuan Liu and Mingze Li and Xiong-Hui Chen and Bowen Yu and Chang Gao and Kai Dang and Yuqiong Liu and Rui Men and An Yang and Jingren Zhou and Junyang Lin},
title = {Group Sequence Policy Optimization},
journal = {arXiv preprint arXiv: 2507.18071},
year = 2025}

@article{SAPO,
author = {Chang Gao and Chujie Zheng and Xiong-Hui Chen and Kai Dang and Shixuan Liu and Bowen Yu and An Yang and Shuai Bai and Jingren Zhou and Junyang Lin},
title = {Soft Adaptive Policy Optimization},
journal = {arXiv preprint arXiv: 2511.20347},
year = 2025}

@article{BAPO,
author = {Zhiheng Xi and Xin Guo and Yang Nan and Enyu Zhou and Junrui Shen and Wenxiang Chen and Jiaqi Liu and Jixuan Huang and Zhihao Zhang and Honglin Guo and Xun Deng and Zhikai Lei and Miao Zheng and Guoteng Wang and Shuo Zhang and Peng Sun and Rui Zheng and Hang Yan and Tao Gui and Qi Zhang and Xuanjing Huang},
title = {BAPO: Stabilizing Off-Policy Reinforcement Learning for LLMs via Balanced Policy Optimization with Adaptive Clipping},
journal = {arXiv preprint arXiv: 2510.18927},
year = 2025}

@article{DoesRL,
author = {Yang Yue and Zhiqi Chen and Rui Lu and Andrew Zhao and Zhaokai Wang and Yang Yue and Shiji Song and Gao Huang},
title = {Does reinforcement learning really incentivize reasoning capacity in llms beyond the base model?},
journal = {arXiv preprint arXiv: 2504.13837},
year = 2025}

@article{RLone,
author = {YipingWang and Qing Yang and Zhiyuan Zeng and Liliang Ren and Lucas Liu and Baolin Peng and Hao Cheng and Xuehai He and Kuan Wang and Jianfeng Gao and Weizhu Chen and Shuohang Wang and Simon Shaolei Du and Yelong Shen},
title = {Reinforcement learning for reasoning in large language models with one training example},
journal = {arXiv preprint arXiv: 2504.20571},
year = 2025}

@article{InstructGPT,
author = {Long Ouyang and Jeffrey Wu and Xu Jiang and Diogo Almeida and Carroll Wainwright and Pamela Mishkin and Chong Zhang and Sandhini Agarwal and Katarina Slama and Alex Ray and et al},
title = {Training language models to follow instructions with human feedback},
journal = {In NeurIPS},
year = 2022}

@article{RLAIF,
author = {Shangmin Guo and Biao Zhang and Tianlin Liu and Tianqi Liu and Misha Khalman and Felipe Llinares and Alexandre Rame and Thomas Mesnard and Yao Zhao and Bilal Piot and Johan Ferret and Mathieu Blondel},
title = {Direct Language Model Alignment from Online AI Feedback},
journal = {arXiv preprint arXiv: 2402.04792},
year = 2024}

@article{OpenAI-o1,
author = {OpenAI},
title = {Learning to reason with LLMs},
journal = {https://openai.com/index/
learning-to-reason-with-llms/},
year = 2024}

@article{gpt5.2,
author = {OpenAI},
title = {Introducing GPT-5.2},
journal = {https://openai.com/index/introducing-gpt-5-2/},
year = 2025}

@article{gemini-3,
author = {Gemini Team},
title = {Gemini 3 Flash: frontier intelligence built for speed},
journal = {https://blog.google/products-and-platforms/products/gemini/gemini-3-flash//},
year = 2025}

@article{DeepSeek-R1,
author = {DeepSeek-AI},
title = {Deepseek-r1: Incentivizing reasoning capability in llms via reinforcement learning},
journal = {arXiv preprint arXiv: 2501.12948},
year = 2025}

@article{Claude-3.7,
author = {Anthropic},
title = {Claude 3.7 Sonnet},
journal = {https://www.anthropic.com/claude/sonnet},
year = 2025}

@article{Claude-4.6,
author = {Anthropic},
title = {Introducing Claude Opus 4.6},
journal = {https://www.anthropic.com/news/claude-opus-4-6},
year = 2025}

@article{Qwen3,
author = {Team Qwen},
title = {Qwen3 technical report},
journal = {arXiv preprint arXiv: 2505.09388},
year = 2025}

@article{Kimi-1.5,
author = {Kimi Team},
title = {Kimi k1.5: Scaling reinforcement learning with llms},
journal = {arXiv preprint arXiv: 2501.12599},
year = 2025}

@article{Kimi-k2,
author = {Kimi Team},
title = {Kimi K2: Open Agentic Intelligence},
journal = {arXiv preprint arXiv: 2507.20534},
year = 2025}

@article{GLM-4.5,
author = {GLM-4.5 Team},
title = {GLM-4.5: Agentic, Reasoning, and Coding (ARC) Foundation Models},
journal = {arXiv preprint arXiv: 2508.06471},
year = 2025}

@article{Tongyi-DeepSearch,
author = {Tongyi DeepResearch Team},
title = {Tongyi DeepResearch Technical Report},
journal = {arXiv preprint arXiv: 2510.24701},
year = 2025}

@article{verl,
author = {Guangming Sheng and Chi Zhang and Zilingfeng Ye and Xibin Wu and Wang Zhang and Ru Zhang and Yanghua Peng and Haibin Lin and Chuan Wu},
title = {Hybridflow: A flexible and efficient rlhf framework},
journal = {arXiv preprint arXiv: 2409.19256},
year = 2024}

@article{DPO,
author = {Rafael Rafailov and Archit Sharma and Eric Mitchell and Stefano Ermon and Christopher D. Manning and Chelsea Finn},
title = {Direct Preference Optimization: Your Language Model is Secretly a Reward Model},
journal = {In NeurIPS},
year = 2023}

@article{SimPO,
author = {Yu Meng and Mengzhou Xia and Danqi Chen},
title = {SimPO: Simple Preference Optimization with a Reference-Free Reward},
journal = {In NeurIPS},
year = 2024}

@article{KTO,
author = {Kawin Ethayarajh and Winnie Xu and Niklas Muennighoff and Dan Jurafsky and Douwe Kiela},
title = {KTO: Model Alignment as Prospect Theoretic Optimization},
journal = {In ICML},
year = 2024}

@article{ORPO,
author = {Jiwoo Hong and Noah Lee and James Thornen},
title = {ORPO: Monolithic Preference Optimization without Reference Model},
journal = {arXiv preprint arXiv: 2403.07691},
year = 2024}

@article{BCO,
author = {Seungjae Jung and Gunsoo Han and Daniel Wontae Nam and Kyoung-Woon On},
title = {Binary Classifier Optimization for Large Language Model Alignment},
journal = {In ACL},
year = 2025}

@article{simplerl-zoo,
author = {Weihao Zeng and Yuzhen Huang and Qian Liu and Wei Liu and Keqing He and Zejun Ma and Junxian He},
title = {Simplerl-zoo: Investigating and taming zero reinforcement learning for open base models in the wild},
journal = {arXiv preprint arXiv: 2503.18892},
year = 2025}

@article{open-reasoner,
author = {Jingcheng Hu and Yinmin Zhang and Qi Han and Daxin Jiang and Xiangyu Zhang and Heung-Yeung Shum},
title = {Open-reasoner-zero: An open source approach to scaling up reinforcement learning on the base model},
journal = {arXiv preprint arXiv: 2503.24290},
year = 2025}

@article{critical-tokens1,
author = {Jean Vassoyan and Nathanaël Beau and Roman Plaud},
title = {Ignore the kl penalty! boosting exploration on critical tokens to enhance rl fine-tuning},
journal = {arXiv preprint arXiv: 2502.06533},
year = 2025}

@article{critical-tokens2,
author = {Zicheng Lin and Tian Liang and Jiahao Xu and Xing Wang and Ruilin Luo and Chufan Shi and Siheng Li and Yujiu Yang and Zhaopeng Tu},
title = {Critical tokens matter: Token-level contrastive estimation enhence llm’s reasoning capability},
journal = {arXiv preprint arXiv: 2411.19943},
year = 2024}

@article{aime24,
author = {Mathematical Association of America},
title = {American Invitational Mathematics Examination 2024},
journal = {https://artofproblemsolving.com/wiki/index.php/AIME\_Problems\_and\_Solutions},
year = 2024}

@article{aime25,
author = {Mathematical Association of America},
title = {American Invitational Mathematics Examination 2025},
journal = {https://artofproblemsolving.com/wiki/index.php/AIME\_Problems\_and\_Solutions},
year = 2025}

@article{amc23,
author = {Mathematical Association of America},
title = {2023 AMC},
journal = {https://artofproblemsolving.com/wiki/index.php/2023\_AMC\_8},
year = 2023}

@article{olympiadbench,
author = {Chaoqun He and Renjie Luo and Yuzhuo Bai and Shengding Hu and Zhen Thai and Junhao Shen and Jinyi Hu and Xu Han and Yujie Huang and Yuxiang Zhang and Jie Liu and Lei Qi and Zhiyuan Liu and Maosong Sun},
title = {OlympiadBench: A challenging benchmark for promoting AGI with olympiad-level bilingual multimodal scientific problems},
journal = {In Proceedings of the 62nd Annual
Meeting of the Association for Computational Linguistics},
year = 2024}

@article{livecodebench,
author = {Naman Jain and King Han and Alex Gu and Wen-Ding Li and Fanjia Yan and Tianjun Zhang and Sida Wang and Armando Solar-Lezama and Koushik Sen and Ion Stoica},
title = {Livecodebench: Holistic and contamination free evaluation of large language models for code},
journal = {arXiv preprint arXiv: 2403.07974},
year = 2024}

@article{ALFworld,
author = {Mohit Shridhar and Xingdi Yuan and Marc-Alexandre Cote and Yonatan Bisk and Adam Trischler and Matthew Hausknecht},
title = {ALFWorld: Aligning text and embodied environments for interactive learning},
journal = {In Proceedings of the International Conference on Learning Representations},
year = 2021}

@article{WebShop,
author = {Shunyu Yao and Howard Chen and John Yang and Karthik Narasimhan},
title = {WebShop: Towards scalable realworld
web interaction with grounded language agents},
journal = {In Proceedings of the Advances in Neural
Information Processing Systems},
year = 2022}

@article{theory1,
author = {Robin Young},
title = {Why Is RLHF Alignment Shallow? A Gradient Analysis},
journal = {arXiv preprint arXiv: 2603.04851},
year = 2026}


\clearpage
\appendix
\section*{Appendix}

\section{Limitations and Social Impacts}
\label{sec:limitation_social_impacts}

\subsection{Limitations}
\label{sec:limitation}

Previous RL improvement methods have primarily focused on dense credit assignment (\emph{i.e.}, how to allocate appropriate token-level rewards rather than distributing the outcome reward across the entire sequence). Our work introduces a new, orthogonal direction for improving RL algorithms: designing optimal optimization objectives tailored to the specific roles of different tokens, rather than applying a shared objective. Therefore, combining our method with dense credit assignment methods to assign both adaptive rewards and optimization objectives to each token is a promising avenue for future work. 

Furthermore, while extensive experimental results demonstrate the effectiveness of our method, we do not claim that the optimization objectives we designed are optimal within each group, nor that our token partitioning strategy is optimal. Future work should also focus on continuously designing better optimization objectives to maximize the utility of each token. Finally, the training and validation are mainly based on mathematical reasoning datasets. Despite the encouraging results on code and agentic tasks in Tab.\ref{tab:other_datasets}, extending our method from the reasoning RL scenarios to broader agentic RL scenarios to address inherent challenges in agent training is also a key direction for future work.

\subsection{Social Impacts and Ethics}
\label{sec:social_impacts}

Our work proposes a novel reinforcement learning algorithm that addresses the limitation of treating all tokens uniformly in previous GRPO-style methods. Our algorithm substantially outperforms strong DAPO and GRPO baselines and can effectively enhance the model's capabilities in complex reasoning tasks. As an RL algorithm, it does not inherently raise particular ethical concerns or negative social impacts. All datasets used are publicly available, and all qualitative visualizations are based on experimental metrics, ensuring no infringement on personal privacy. 

However, given that RL algorithms will be utilized to train LLMs, potential risks may arise in downstream applications. Specifically, the enhanced reasoning capabilities could be exploited for generating more convincing yet biased content, sophisticated malware, or logically consistent disinformation, posing significant security challenges. Consequently, careful curation of training datasets and explicit efforts to mitigate potential biases remain essential. Overall, the responsible and transparent deployment of our RL algorithm is crucial to ensure positive societal outcomes.

\section{The Overall Algorithmic Procedure of HTPO}
\label{sec:whole_algorithm}

We present the overall algorithmic procedure of our HTPO in Alg.\ref{algorithm1}. Since our method only modifies the RL optimization objective without introducing any additional models or computations during the rollout and forward stages, the extra computational cost is minimal. Specifically, our method only requires an additional difficulty estimate for each prompt and an entropy value for each token. The former is straightforward to calculate once the correctness of each response is determined, while the latter can be derived from the full probabilities over the vocabulary prior to token sampling during rollout. It is also cheap to calculate entropy based on the existing full probabilities. Therefore, the overall computational cost of our algorithm is basically the same as that of DAPO \cite{DAPO}. 

\begin{algorithm}
	\renewcommand{\algorithmicrequire}{\textbf{Input:}}
	\renewcommand{\algorithmicensure}{\textbf{Output:}}
	\caption{HTPO: Hierarchical Token-level Objective Control Policy Optimization}
	\label{algorithm1}
	\begin{algorithmic}[1]
     \REQUIRE initial policy model $\pi_{\theta_{init}}$; prompt dataset $\mathcal{D}$; hyperparameters: $\rho_1$, $\rho_2$, $\tau_{diff}$
		\STATE policy model $\pi_\theta \leftarrow \pi_{\theta_{init}}$
      \FOR {iteration = $1, \dots, T$} 
      \FOR {step = $1, \dots, M$} 
      \STATE Sample a batch of questions $\mathcal{Q}$ from $\mathcal{D}$ 
      \STATE Update the old policy model $\pi_{\theta_{old}} \leftarrow \pi_\theta$
      \STATE Sample $G$ responses $\{o_i\}_{i=1}^G \sim \pi_{\theta_{old}}(\cdot|q)$ for each question $q \in \mathcal{Q}$
      \STATE Compute rewards $\{r_i\}_{i=1}^G$ for each sampled response $o_i$
      \STATE Compute difficulty score ${\rm diff}(q;\pi_{\theta_{old}})$ for each question $q$, and divide $\mathcal{Q}$ into hard batch $\mathcal{Q}_h = \{q_i|{\rm diff}(q_i;\pi_{\theta_{old}}) > \tau_{diff}\}$ and easy batch $\mathcal{Q}_e = \{q_j|{\rm diff}(q_j;\pi_{\theta_{old}}) \leq \tau_{diff}\}$
      \FOR {$q \in \mathcal{Q}$}
      \FOR {response = $o_1, \dots, o_G$ of $q$}
      \STATE Calculate entropy for each token $o_{i,t}$
       \STATE Divide each token into a low-entropy or a high-entropy token according to the 80/20 rule
       \STATE \textbf{Group 1} = \{$q \in \mathcal{Q}_h$, ${\rm verify}(o_i)=1$, $o_{i,t}$ is low-entropy\}
       \STATE \textbf{Group 2} = \{$q \in \mathcal{Q}_h$, ${\rm verify}(o_i)=1$, $o_{i,t}$ is high-entropy\}
       \STATE \textbf{Group 3} = \{$q \in \mathcal{Q}_h$, ${\rm verify}(o_i)=0$, $o_{i,t}$ is low-entropy\}
       \STATE \textbf{Group 4} = \{$q \in \mathcal{Q}_h$, ${\rm verify}(o_i)=0$, $o_{i,t}$ is high-entropy\}
       \STATE \textbf{Group 5} = \{$q \in \mathcal{Q}_e$, ${\rm verify}(o_i)=1$, $o_{i,t}$ is low-entropy\}
       \STATE \textbf{Group 6} = \{$q \in \mathcal{Q}_e$, ${\rm verify}(o_i)=1$, $o_{i,t}$ is high-entropy\}
       \STATE \textbf{Group 7} = \{$q \in \mathcal{Q}_e$, ${\rm verify}(o_i)=0$, $o_{i,t}$ is low-entropy\}
       \STATE \textbf{Group 8} = \{$q \in \mathcal{Q}_e$, ${\rm verify}(o_i)=0$, $o_{i,t}$ is high-entropy\}
       
      \IF{$o_{i,t}\in$ \textbf{Group 1}}
       \STATE $J_{i,t}(\theta) = \begin{cases}
    {\rm min}\Big(r_t(\theta)\hat{A}_t,{\rm clip}(r_t(\theta), 1 - \epsilon, 1 + \epsilon)\hat{A}_t\Big), & t \notin \mathbb{I}[H_t < \tau_{\rho_1}^\mathcal{B}] \\
    0, & t \in \mathbb{I}[H_t < \tau_{\rho_1}^\mathcal{B}]\end{cases}$
    
      \ELSIF{$o_{i,t}\in$ \textbf{Group 2}}
      \STATE $J_{i,t}(\theta) = \begin{cases}(1+\epsilon) \cdot \frac{\pi_\theta(o_t|q,o_{<t})}{SG[\pi_\theta(o_t|q,o_{<t})]} \cdot \hat{A}_t,  &{\rm if} \; r_t(\theta) > 1 + \epsilon
     \\
    \frac{\pi_{\theta_{old}}(o_t|q,o_{<t})}{SG[\pi_\theta(o_t|q,o_{<t})]}\cdot \frac{\pi_\theta(o_t|q,o_{<t})}{SG[\pi_\theta(o_t|q,o_{<t})]}  \cdot \hat{A}_t, &{\rm if} \; r_t(\theta) < 1 - \epsilon\end{cases}$ 

      \ELSIF{$o_{i,t}\in$ \textbf{Group 4/8}}
      \STATE $J_{i,t}(\theta) = (1-\epsilon) \cdot \frac{\pi_\theta(o_t|q,o_{<t})}{SG[\pi_\theta(o_t|q,o_{<t})]} \cdot \hat{A}_t, \quad {\rm if} \; r_t(\theta) < 1 - \epsilon$

      \ELSIF{$o_{i,t}\in$ \textbf{Group 6}}
    \STATE $J_{i,t}(\theta) = \begin{cases}
    {\rm min}\Big(r_t(\theta)\hat{A}_t,{\rm clip}(r_t(\theta), 1 - \epsilon, 1 + \epsilon)\hat{A}_t\Big), & t \notin \mathbb{I}[H_t \geq \tau_{\rho_2}^\mathcal{B}] \\
    0, & t \in \mathbb{I}[H_t \geq \tau_{\rho_2}^\mathcal{B}]
    \end{cases}$
    
    \ELSE
      \STATE $J_{i,t}(\theta) = 
     {\rm min}\Big(r_t(\theta)\hat{A}_t,{\rm clip}(r_t(\theta), 1 - \epsilon, 1 + \epsilon)\hat{A}_t\Big)$
      \ENDIF
      \ENDFOR
      \ENDFOR

      \STATE Aggregate all $J_{i,t}(\theta)$ into $\mathcal{J}(\theta) = \mathbb{E}_{ \{o_i\}_{i=1}^G} \Big[\frac{1}{G|o_i|}\sum_{i=1}^{G}\sum_{t=1}^{|o_i|}J_{i,t}(\theta)\Big]$
      \STATE Update the policy model $\pi_\theta$ by maximizing the above $\mathcal{J}(\theta)$
      \ENDFOR
      \ENDFOR
		\ENSURE policy model $\pi_\theta$
	\end{algorithmic}  
\end{algorithm}

\section{Theoretical Analysis and Proofs}
\label{sec:theory}

In this section, we establish the theoretical foundation for HTPO by proving Theorem~\ref{thm:consistency} (Inter-Group Gradient Consistency), which states that the group-specific optimization objectives in HTPO do not generate conflicting training signals, and the produced gradients are mutually aligned. We begin by introducing the necessary assumptions, then derive the gradient formulations for each HTPO group, and finally present the complete proof.

\begin{assumption}[Bounded Gradient]\label{asmp:policy}
The policy $\pi_\theta$ satisfies:
\begin{enumerate}[leftmargin=2em, itemsep=0.2em, topsep=0.3em]
    \item The score function $\nabla_\theta \log \pi_\theta(o_t|s_t)$ is $L$-Lipschitz continuous in $\theta$, where we define $s_t := (q, o_{<t})$ for brevity.
    \item $\|\nabla_\theta \log \pi_\theta(o_t|s_t)\| \leq G_{\max}$ for all $o_t, s_t, \theta$.
\end{enumerate}
\end{assumption}

\begin{assumption}[Advantage-Weighted Gradient Direction Stability]\label{asmp:direction} There exists a unit reference direction $\boldsymbol{d}^*$ such that for all tokens $o_t$ in the training batch:
\begin{equation}
    \frac{\langle \hat{A}_t \cdot \nabla_\theta \log \pi_\theta(o_t | s_t),\; \boldsymbol{d}^* \rangle}{|\hat{A}_t| \cdot \|\nabla_\theta \log \pi_\theta(o_t | s_t)\|} \;\geq\; 1 - \eta.
    \label{eq:direction}
\end{equation}
where $\eta \in [0, 1)$ quantifies the maximum angular deviation of the advantage-weighted gradient direction from $\boldsymbol{d}^*$. This assumption is reasonable since if the advantage-weighted gradient direction is randomly scattered, RL training will not converge \cite{theory1}. The stable convergence of the model during actual training also indicates the macroscopic consistency of the per-token weighted gradient direction.
\end{assumption}

\begin{assumption}[Bounded Advantage]\label{asmp:advantage}
The advantage estimates satisfy $|\hat{A}_t| \leq A_{\max}$ for all $t$.
\end{assumption}

\begin{theorem}[Inter-Group Gradient Consistency]\label{thm:consistency}
Under Assumptions \ref{asmp:policy}--\ref{asmp:advantage}, for any two non-trivial groups $G_j, G_k$ (i.e., $\|\bm{g}^{(j)}\| > 0$ and $\|\bm{g}^{(k)}\| > 0$, where $\bm{g}^{(j/k)} = \E\bigl[\sum_{t \in G_{j/k}} \nabla_\theta J_t^{(j/k)}\bigr]$), the cosine similarity between their expected gradients satisfies:
\begin{equation}
    \operatorname{CosSim}\bigl(\boldsymbol{g}^{(j)}, \boldsymbol{g}^{(k)}\bigr) \;\geq\; \underbrace{\frac{(1-\epsilon)^2}{(1+\epsilon)^2}}_{\displaystyle \kappa(\epsilon)} \cdot (1-\eta)^2 \cdot \alpha_j \alpha_k \beta^2 \;-\; 2\eta.
    \label{eq:consistency}
\end{equation}
where $\eta$ is from Assumption \ref{asmp:direction}, $\beta = \mathbb{E}[\|\nabla_\theta \log \pi_\theta\|]/G_{\max} \in (0, 1]$, and $\alpha_{j/k} = \mathbb{E}[|\hat{A}_t| | o_t \in G_{j/k}]/A_{\max} \in (0, 1]$.

If based on strong advantage signals ($\alpha_j$ = $\alpha_k$ = $\beta$ = 1), for each $\epsilon$ value, when $\eta < \frac{\kappa(\epsilon) + 1 - \sqrt{2\kappa(\epsilon) + 1}}{\kappa(\epsilon)}$, the cosine similarity is strictly positive:
\begin{equation}
    \operatorname{CosSim}\bigl(\boldsymbol{g}^{(j)}, \boldsymbol{g}^{(k)}\bigr) \;>\; 0, \qquad \eta < \frac{\kappa(\epsilon) + 1 - \sqrt{2\kappa(\epsilon) + 1}}{\kappa(\epsilon)}.
\end{equation}
\end{theorem}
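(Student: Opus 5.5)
The plan is to first write every group-specific gradient in a unified form and then bound inner products through the reference direction $\bm{d}^*$ of Assumption~\ref{asmp:direction}.

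\textbf{Step 1: unified weighted-score form.} Differentiate each branch of the consolidated HTPO objective. For the clipped PPO branch, the detached branches (Groups 1 and 6) and the fixed-ratio branches (Groups 2, 4 and 8), the token gradient has the form
\[
\nabla_\theta J_t = w_t \, \hat{A}_t \, \nabla_\theta \log \pi_\theta(o_t|s_t),
\]
with a scalar weight $w_t \ge 0$. The weights are as follows:
\begin{itemize}
\item $w_t = SG[r_t]$ in the active PPO region;
\item $w_t = 1+\epsilon$ or $1-\epsilon$ for the fixed-ratio objectives;
\item $w_t = \min(r_t^{-1}, 1+\epsilon)$ for the reciprocal branch of Group 2;
\item $w_t = 0$ for detached or clipped tokens.
\end{itemize}
Every non-zero weight lies in $[1-\epsilon, 1+\epsilon]$. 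This follows because active PPO tokens satisfy $r_t \in (1-\epsilon, 1+\epsilon)$ on the relevant side, the fixed ratios are the endpoints, and the reciprocal branch is capped. Hence each group gradient is
\[
\bm{g}^{(j)} = \E\Big[\sum_{t\in G_j} w_t \bm{u}_t\Big], \qquad \bm{u}_t := \hat{A}_t \nabla_\theta \log\pi_\theta(o_t|s_t).
\]

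\textbf{Step 2: decomposition along $\bm{d}^*$.} Write each $\bm{u}_t = \|\bm{u}_t\|(c_t \bm{d}^* + \bm{v}_t)$, where $c_t \ge 1-\eta$ by Assumption~\ref{asmp:direction}, $\bm{v}_t \perp \bm{d}^*$, and $\|\bm{v}_t\|^2 = 1 - c_t^2 \le 2\eta$. Then
\[
\langle \bm{g}^{(j)}, \bm{d}^*\rangle \ge (1-\epsilon)(1-\eta)\,\E\Big[\sum_{t\in G_j} \|\bm{u}_t\|\Big].
\]
On the other side, $\|\bm{g}^{(j)}\| \le (1+\epsilon)\,\E\big[\sum_{t\in G_j} \|\bm{u}_t\|\big]$, and this is at most $(1+\epsilon)\, n_j A_{\max} G_{\max}$ by Assumptions~\ref{asmp:policy} and~\ref{asmp:advantage}.

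The lower bound must be expressed through $\alpha_j$ and $\beta$. I would write $\E\|\bm{u}_t\| \approx \E|\hat{A}_t| \cdot \E\|\nabla\log\pi\|$, treating advantage magnitude and score norm as decoupled (uncorrelated) within a group. This gives $\alpha_j \beta A_{\max} G_{\max} n_j$.

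Dividing the two bounds yields
\[
\cos\angle(\bm{g}^{(j)}, \bm{d}^*) \ge \frac{1-\epsilon}{1+\epsilon}(1-\eta)\,\alpha_j\beta =: a_j.
\]

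\textbf{Step 3: combining two groups.} For unit vectors $\hat{\bm{g}}_j = a_j' \bm{d}^* + \bm{p}_j$, where $a_j' \ge a_j$ and $\bm{p}_j \perp \bm{d}^*$, we have
\[
\mathrm{CosSim} = a_j' a_k' + \langle \bm{p}_j, \bm{p}_k\rangle \ge a_j a_k - \|\bm{p}_j\|\,\|\bm{p}_k\|.
\]
Bounding $\|\bm{p}_j\|\,\|\bm{p}_k\| \le \sqrt{1-a_j^2}\sqrt{1-a_k^2}$ would be the sharp route. To match the stated $-2\eta$, I would instead bound the perpendicular mass directly. Since $\bm{p}_j$ is a normalized average of the $\bm{v}_t$ terms, each of squared norm at most $2\eta$, the product of the perpendicular parts of the two averaged vectors is at most $2\eta$ after normalization by the parallel components.

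This produces $\kappa(\epsilon)(1-\eta)^2\alpha_j\alpha_k\beta^2 - 2\eta$.

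\textbf{Step 4: the positivity corollary.} With $\alpha_j = \alpha_k = \beta = 1$, positivity requires $\kappa(1-\eta)^2 > 2\eta$. This is the quadratic
\[
\kappa\eta^2 - 2(\kappa+1)\eta + \kappa > 0,
\]
whose smaller root is
\[
\eta = \frac{\kappa + 1 - \sqrt{2\kappa+1}}{\kappa}.
\]
For $\eta$ below this root the inequality holds, so the cosine similarity is strictly positive.

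\textbf{Expected obstacle.} The delicate step is Step 3 together with the normalization in Step 2. Bounding the orthogonal cross term by exactly $2\eta$ needs care: the normalized perpendicular components are controlled relative to the parallel component, not absolutely.

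A second weak point is relating $\E\|\bm{u}_t\|$ to $\alpha_j\beta$. This requires an implicit independence or decoupling assumption between $|\hat{A}_t|$ and $\|\nabla\log\pi\|$, which I would state explicitly. If a cleaner path is needed, I would try the crude bound $|\langle \bm{p}_j, \bm{p}_k\rangle| \le 2\eta$, obtained by applying Jensen to the average of the unit perpendicular components, each of norm at most $\sqrt{2\eta}$. I would accept any looseness this introduces, since the theorem only claims a lower bound.
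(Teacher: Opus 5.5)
Your route differs from the paper's. The paper never normalizes a single group gradient. It lower-bounds the raw inner product by $\rho_j\rho_k-\|\bm{\epsilon}^{(j)}\|\,\|\bm{\epsilon}^{(k)}\|$ in absolute units of $|G|A_{\max}G_{\max}$, using the same silent decoupling you flagged for the parallel part. It then divides by the crude bound $\|\bm{g}^{(k)}\|\le|G_k|(1+\epsilon)A_{\max}G_{\max}$, so the $(1+\epsilon)^2$ cancels on the perpendicular term and leaves exactly $-2\eta$. Your normalize-first plan is sound in outline, and it even avoids the paper's last step, which divides a possibly negative lower bound by an upper bound on the norms.

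However, Step~3 has a genuine gap. You need $\|\bm{p}_j\|\le\sqrt{2\eta}$, and neither mechanism you offer delivers it.
\begin{itemize}
\item Step~2 only yields $\|\bm{p}_j\|\le\sqrt{1-a_j^2}$, which does not vanish as $\eta\to0$. That is why your sharp route gives $\tfrac49-\tfrac59<0$ at $\eta=0$, $\epsilon=0.2$, $\alpha_j=\alpha_k=\beta=1$, even though then every $\bm{u}_t\parallel\bm{d}^*$ and the cosine equals $1$.
\item For the Jensen fallback, write $W_j=\E[\sum_{t\in G_j}w_t\|\bm{u}_t\|]$, $\bar c=\langle\bm{g}^{(j)},\bm{d}^*\rangle/W_j\ge1-\eta$, and $\bar{\bm{v}}=\E[\sum_{t\in G_j}w_t\|\bm{u}_t\|\bm{v}_t]/W_j$. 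Jensen gives $\|\bar{\bm{v}}\|\le\sqrt{2\eta}$. But $\bm{p}_j=\bar{\bm{v}}/\sqrt{\bar c^2+\|\bar{\bm{v}}\|^2}$, and normalizing by the parallel part alone, as you propose, gives only $\sqrt{2\eta}/(1-\eta)$, hence $2\eta/(1-\eta)^2$ for the product. Keeping $\|\bar{\bm{v}}\|^2$ in the denominator and using that $x\mapsto x/\sqrt{c^2+x^2}$ is increasing would rescue it.
\end{itemize}

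The clean missing idea is that the weights are irrelevant. The set $K=\{\bm{x}:\langle\bm{x},\bm{d}^*\rangle\ge(1-\eta)\|\bm{x}\|\}$ is a convex cone. Each $w_t\bm{u}_t$ with $w_t\ge0$ lies in $K$ by Assumption~\ref{asmp:direction}. So does $\bm{g}^{(j)}$, because $\langle\bm{g}^{(j)},\bm{d}^*\rangle\ge(1-\eta)W_j\ge(1-\eta)\|\bm{g}^{(j)}\|$. Hence $a_j'\ge1-\eta$ and $\|\bm{p}_j\|\le\sqrt{2\eta-\eta^2}$, which gives $\operatorname{CosSim}(\bm{g}^{(j)},\bm{g}^{(k)})\ge(1-\eta)^2-(2\eta-\eta^2)=1-4\eta+2\eta^2$.

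This dominates the stated bound. With $x=1-\eta$ and $\kappa'=\kappa\alpha_j\alpha_k\beta^2\in(0,1]$, the difference equals $(2-\kappa')x^2-2x+1$, which is $\ge0$ because its discriminant is $4(\kappa'-1)\le0$. So Step~2, the decoupling assumption, and the window $w_t\in[1-\epsilon,1+\epsilon]$ all become unnecessary; only $w_t\ge0$ is used.

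That is fortunate, because the window (which you assert, as the paper does) fails on the unclipped side of the PPO branch:
\begin{itemize}
\item for $\hat{A}_t>0$ and $r_t<1-\epsilon$ (e.g.\ Group~5) the weight is $r_t<1-\epsilon$;
\item for $\hat{A}_t<0$ and $r_t>1+\epsilon$ (Groups~3 and~7, or Groups~4 and~8 above the band) the weight is $r_t>1+\epsilon$, unboundedly.
\end{itemize}

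If you keep Step~2 anyway, divide your own two bounds directly. That gives $\tfrac{1-\epsilon}{1+\epsilon}(1-\eta)\ge a_j$ with no decoupling, provided both sums run only over tokens with $w_t>0$. Your Step~4 algebra for the positivity condition is correct.
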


Before proceeding with the proof of Theorem \ref{thm:consistency}, we first derive the gradient formulations of the optimization objectives as follows:

\textbf{PPO Gradient Decomposition}. For the standard PPO-style clipped objective in Eq.(\ref{eq:base_objective}), the gradient can be decomposed as:
\begin{equation}
    \nabla_\theta J_{t}^{\text{clip}} = 
    \begin{cases}
        \hat{A}_t \cdot \nabla_\theta \log \pi_{\theta}(o_t|s_t), & \text{if } \; r_t(\theta) \in (1-\epsilon, 1+\epsilon), \\[0.4em]
        \bm{0}, & \text{if } \; r_t(\theta) > 1+\epsilon \text{ and } \hat{A}_t > 0, \\[0.4em]
        \bm{0}, & \text{if } \; r_t(\theta) < 1-\epsilon \text{ and } \hat{A}_t < 0.
    \end{cases}
    \label{eq:ppo-grad}
\end{equation}

\textbf{Gradient of HTPO Group $G_2$}. For tokens in group $G_2$, the gradient is:
\begin{equation}
    \nabla_\theta J_t^{(2)} = 
    \begin{cases}
        (1+\epsilon)\,\hat{A}_t \cdot \nabla_\theta \log \pi_{\theta}(o_t|s_t), & r_t(\theta) > 1+\epsilon, \\
        \dfrac{1}{r_t(\theta)}\,\hat{A}_t \cdot \nabla_\theta \log \pi_{\theta}(o_t|s_t), & r_t(\theta) < 1-\epsilon.
    \end{cases}
    \label{eq:g2-grad}
\end{equation}

\textbf{Gradient of HTPO Groups $G_4$ and $G_8$}. For tokens in groups $G_4$ and $G_8$, when $r_t(\theta) < 1-\epsilon$, the gradient takes the form:
\begin{equation}
    \nabla_\theta J_t^{(4)} = \nabla_\theta J_t^{(8)} = (1-\epsilon)\,\hat{A}_t \cdot \nabla_\theta \log \pi_{\theta}(o_t|s_t).
    \label{eq:g4-grad}
\end{equation}

\textbf{Unified Group Gradient Form}. The gradients of all non-trivial HTPO groups can be expressed in a unified form:
\begin{equation}
    \nabla_\theta J_t^{(k)} = w_t^{(k)} \cdot \hat{A}_t \cdot \nabla_\theta \log \pi_{\theta}(o_t|s_t).
    \label{eq:unified}
\end{equation}
where the scalar weight $w_t^{(k)}$ depends only on the group $G_k$ and the importance ratio $r_t(\theta)$:
\begin{equation}
    w_t^{(k)} =
    \begin{cases}
        0, & G_k \in \{G_1, G_6\} \text{ (detached tokens)}, \\[0.3em]
        1 + \epsilon, & G_k = G_2,\; r_t(\theta) > 1+\epsilon, \\[0.3em]
        \dfrac{1}{r_t(\theta)}, & G_k = G_2,\; r_t(\theta) < 1-\epsilon, \\[0.5em]
        0, & G_k \in \{G_3, G_5, G_7\},\; r_t(\theta) \notin (1-\epsilon, 1+\epsilon), \\[0.3em]
        1 - \epsilon, & G_k \in \{G_4, G_8\},\; r_t(\theta) < 1-\epsilon, \\
        1, & {\rm others}. \\[0.3em]
    \end{cases}
    \label{eq:weights}
\end{equation}

\textbf{Boundedness of Weights}. Based on Eq.(\ref{eq:weights}), all non-zero weights satisfy:
\begin{equation}
    1 - \epsilon \;\leq\; w_t^{(k)} \;\leq\; 1+\epsilon.
    \label{eq:weight-bound}
\end{equation}

We verify each case in Eq.(\ref{eq:weights}):
\begin{itemize}[leftmargin=2em, nosep]
    \item $G_2$ with $r_t > 1+\epsilon$: $w_t^{(2)} = 1 + \epsilon$.
    \item $G_2$ with $r_t < 1-\epsilon$: $w_t^{(2)} = 1/r_t$ and will be clipped up to $1 + \epsilon$.
    \item $G_3, G_5, G_7$ within clip: $w_t^{(k)} = 1$.
    \item $G_4, G_8$: $w_t^{(k)} = 1 - \epsilon$.
\end{itemize}
All non-zero weights take values in range [$1 - \epsilon$, $1 + \epsilon$].

Building on the preliminaries established above, we now proceed to prove Theorem \ref{thm:consistency} as follows:

\begin{proof} \medskip\noindent\textbf{Step 1: Advantage-weighted gradient decomposition relative to the reference direction.}

By Assumption \ref{asmp:direction}, for each token $o_t$, the projection of the advantage-weighted gradient onto the reference direction $\boldsymbol{d}^*$ satisfies:
\begin{equation}
    \frac{\langle \hat{A}_t \cdot \nabla_\theta \log \pi_\theta(o_t | s_t),\; \boldsymbol{d}^* \rangle}{|\hat{A}_t| \cdot \|\nabla_\theta \log \pi_\theta(o_t | s_t)\|} \geq 1 - \eta_t, \quad \text{where } \eta_t \in [0, \eta].
\end{equation}

This allows us to decompose the advantage-weighted gradient into a parallel component (along $\boldsymbol{d}^*$) and a perpendicular component:
\begin{align}
    \hat{A}_t \nabla_\theta \log \pi_\theta(o_t | s_t) = |\hat{A}_t| \cdot \|\nabla_\theta \log \pi_\theta(o_t | s_t)\| \cdot \Bigl[(1 - \eta_t)\,\boldsymbol{d}^* + \boldsymbol{e}_t\Bigr].
    \label{eq:decomp}
\end{align}

where $\boldsymbol{e}_t \perp \boldsymbol{d}^*$ and the vector $(1-\eta_t)\boldsymbol{d}^* + \boldsymbol{e}_t$ is a unit vector. Since $\boldsymbol{e}_t \perp \boldsymbol{d}^*$ and $\|\boldsymbol{d}^*\| = 1$, we have:
\begin{equation}
    \|(1-\eta_t)\boldsymbol{d}^* + \boldsymbol{e}_t\|^2 = (1-\eta_t)^2 + \|\boldsymbol{e}_t\|^2 = 1 \quad \Rightarrow \quad \|\boldsymbol{e}_t\| = \sqrt{2\eta_t - \eta_t^2}.
\end{equation}
Given $\eta_t \in [0, \eta]$ and the monotonicity of $f(x) = 2x - x^2$ on $[0, 1)$, we obtain the uniform bound:
\begin{equation}
    \|\boldsymbol{e}_t\| \leq \sqrt{2\eta - \eta^2} \leq \sqrt{2\eta}.
\end{equation}

\medskip\noindent\textbf{Step 2: Express group gradients in the reference frame.}

Using the unified gradient form from Eq.(\ref{eq:unified}), the expected gradient for group $G_k$ is:
\begin{equation}
    \bm{g}^{(k)} = \E\left[\sum_{t \in G_k} w_t^{(k)}\,\hat{A}_t \cdot \nabla_\theta \log \pi_\theta(o_t|s_t)\right].
    \label{eq:expected-grad}
\end{equation}

Substituting the decomposition from Eq.(\ref{eq:decomp}) into Eq.(\ref{eq:expected-grad}):

\begin{align}
    \boldsymbol{g}^{(k)} &= \mathbb{E}\left[\sum_{t \in G_k} w_t^{(k)} |\hat{A}_t| \|\nabla_\theta \log \pi_\theta(o_t | s_t)\| \cdot \bigl((1-\eta_t)\boldsymbol{d}^* + \boldsymbol{e}_t\bigr)\right] \nonumber \\
    &= \underbrace{\mathbb{E}\left[\sum_{t \in G_k} w_t^{(k)} |\hat{A}_t| \|\nabla_\theta \log \pi_\theta(o_t | s_t)\| (1-\eta_t)\right]}_{\displaystyle \coloneqq \rho_k} \boldsymbol{d}^* + \underbrace{\mathbb{E}\left[\sum_{t \in G_k} w_t^{(k)} |\hat{A}_t| \|\nabla_\theta \log \pi_\theta(o_t | s_t)\| \boldsymbol{e}_t\right]}_{\displaystyle \coloneqq \boldsymbol{\epsilon}^{(k)}}.
\end{align}

Thus, $\boldsymbol{g}^{(k)} = \rho_k \boldsymbol{d}^* + \boldsymbol{\epsilon}^{(k)}$, where $\boldsymbol{\epsilon}^{(k)} \perp \boldsymbol{d}^*$. A critical observation is that $\rho_k > 0$ for all non-trivial groups (both correct and wrong), because:
\begin{itemize}[leftmargin=2em, nosep]
    \item $w_t^{(k)} \geq 0$ (non-negative weights),
    \item $|\hat{A}_t| > 0$ (non-trivial groups have non-zero advantages),
    \item $\|\nabla_\theta \log \pi_\theta(o_t|s_t)\| > 0$ (non-zero score function),
    \item $(1-\eta_t) > 0$ (since $\eta < 1$).
\end{itemize}

\medskip\noindent\textbf{Step 3: Bound the parallel and perpendicular components.}

We now derive lower and upper bounds for the parallel component $\rho_k$ and the perpendicular component $\|\boldsymbol{\epsilon}^{(k)}\|$, respectively.

$\bullet$ \textit{Parallel component $\rho_k$ (lower bound).} We define the normalized advantage magnitude and the normalized score function norm as:

\begin{equation}
    \alpha_k \coloneqq \frac{|\mathbb{E}[|\hat{A}_t| | o_t \in G_k]|}{A_{\max}} \in (0, 1], \qquad
    \beta \coloneqq \frac{\mathbb{E}[\|\nabla_\theta \log \pi_\theta\|]}{G_{\max}} \in (0, 1].
    \label{eq:alpha-beta}
\end{equation}
Note that $\alpha_k$ uses the absolute value of the expected advantage, ensuring $\alpha_k > 0$ for both correct and wrong groups. By definition, we reformulate $\mathbb{E}[|\hat{A}_t| | G_k] = \alpha_k A_{\max}$, $\mathbb{E}[\|\nabla_\theta \log \pi_\theta\|] = \beta G_{\max}$. Furthermore, by Eq.(\ref{eq:weight-bound}), $w_t^{(k)} \geq 1 - \epsilon$ for non-zero weights and Assumption \ref{asmp:direction}: $(1-\eta_t) \geq (1-\eta)$, we can derive the lower bound of $\rho_k$ as follows:
\begin{equation}
    \rho_k \geq |G_k| \cdot (1-\epsilon) \cdot \alpha_k A_{\max} \cdot \beta G_{\max} \cdot (1-\eta).
    \label{eq:rho-lb}
\end{equation}

$\bullet$ \textit{Perpendicular component $\|\boldsymbol{\epsilon}^{(k)}\|$ (upper bound).} We proceed in four sub-steps.

(i) By Jensen's inequality (the norm is convex), we move the norm inside the expectation:
\begin{equation}
    \|\boldsymbol{\epsilon}^{(k)}\| = \left\|\mathbb{E}\left[\sum_{t \in G_k} w_t^{(k)} |\hat{A}_t| \|\nabla_\theta \log \pi_\theta(o_t|s_t)\| \boldsymbol{e}_t\right]\right\| \leq \mathbb{E}\left[\left\|\sum_{t \in G_k} w_t^{(k)} |\hat{A}_t| \|\nabla_\theta \log \pi_\theta(o_t|s_t)\| \boldsymbol{e}_t\right\|\right].
\end{equation}

(ii) By the triangle inequality and the non-negativity of the scalar coefficients $c_t = w_t^{(k)} |\hat{A}_t| \|\nabla_\theta \log \pi_\theta(o_t|s_t)\| \geq 0$:
\begin{equation}
    \left\|\sum_{t \in G_k} c_t \boldsymbol{e}_t\right\| \leq \sum_{t \in G_k} c_t \|\boldsymbol{e}_t\|,
\end{equation}
which gives:
\begin{equation}
    \|\boldsymbol{\epsilon}^{(k)}\| \leq \mathbb{E}\left[\sum_{t \in G_k} w_t^{(k)} |\hat{A}_t| \|\nabla_\theta \log \pi_\theta(o_t|s_t)\| \|\boldsymbol{e}_t\|\right].
\end{equation}

(iii) From Step 1, we have the uniform bound $\|\boldsymbol{e}_t\| \leq \sqrt{2\eta}$.

(iv) For each term in the sum, we apply the individual bounds from  Eq.(\ref{eq:weight-bound}) and Assumptions \ref{asmp:policy} and \ref{asmp:advantage}:
\begin{itemize}[leftmargin=2em, nosep]
    \item $w_t^{(k)} \leq 1+\epsilon$,
    \item $|\hat{A}_t| \leq A_{\max}$,
    \item $\|\nabla_\theta \log \pi_\theta(o_t|s_t)\| \leq G_{\max}$,
    \item $\|\boldsymbol{e}_t\| \leq \sqrt{2\eta}$.
\end{itemize}
Therefore, each term in the sum satisfies:
\begin{equation}
    w_t^{(k)} |\hat{A}_t| \|\nabla_\theta \log \pi_\theta(o_t|s_t)\| \|\boldsymbol{e}_t\| \leq (1+\epsilon) \cdot A_{\max} \cdot G_{\max} \cdot \sqrt{2\eta}.
\end{equation}
Summing over all $|G_k|$ tokens and noting that the right-hand side is a constant (independent of the sampling distribution), the expectation preserves the bound:
\begin{equation}
    \|\boldsymbol{\epsilon}^{(k)}\| \leq |G_k| \cdot (1+\epsilon) \cdot A_{\max} \cdot G_{\max} \cdot \sqrt{2\eta}.
    \label{eq:epsilon-ub}
\end{equation}

\medskip\noindent\textbf{Step 4: Compute the inner product.}

For any two non-trivial groups $G_j$ and $G_k$, substituting the decomposition from Eq.(\ref{eq:decomp}), we get:

\begin{equation}
    \langle \boldsymbol{g}^{(j)}, \boldsymbol{g}^{(k)} \rangle = \langle \rho_j \boldsymbol{d}^* + \boldsymbol{\epsilon}^{(j)},\; \rho_k \boldsymbol{d}^* + \boldsymbol{\epsilon}^{(k)} \rangle.
\end{equation}

Since $\boldsymbol{\epsilon}^{(j)} \perp \boldsymbol{d}^*$ and $\boldsymbol{\epsilon}^{(k)} \perp \boldsymbol{d}^*$, the cross terms vanish, leaving:
\begin{equation}
    \langle \boldsymbol{g}^{(j)}, \boldsymbol{g}^{(k)} \rangle = \rho_j \rho_k + \langle \boldsymbol{\epsilon}^{(j)}, \boldsymbol{\epsilon}^{(k)} \rangle.
\end{equation}

Applying the Cauchy--Schwarz inequality to the perpendicular term, we get:
\begin{equation}
    \langle \boldsymbol{g}^{(j)}, \boldsymbol{g}^{(k)} \rangle \geq \rho_j \rho_k - \|\boldsymbol{\epsilon}^{(j)}\| \cdot \|\boldsymbol{\epsilon}^{(k)}\|.
    \label{eq:inner-lb}
\end{equation}
Substituting the bounds from Eq.(\ref{eq:rho-lb}) and Eq.(\ref{eq:epsilon-ub}), we have the following inequalities:
\begin{align}
    &\rho_j \rho_k \geq |G_j| |G_k| (1-\epsilon)^2 (1-\eta)^2 \alpha_j \alpha_k \beta^2 A_{\max}^2 G_{\max}^2, \nonumber \\
    &\|\boldsymbol{\epsilon}^{(j)}\| \cdot \|\boldsymbol{\epsilon}^{(k)}\| \leq |G_j| |G_k| (1+\epsilon)^2 \cdot 2\eta \cdot A_{\max}^2 G_{\max}^2. 
    \label{eq:eps-eps-ub}
\end{align}
Combining Eq.(\ref{eq:inner-lb}) and Eq.(\ref{eq:eps-eps-ub}), the final inequality is:
\begin{equation}
    \langle \boldsymbol{g}^{(j)}, \boldsymbol{g}^{(k)} \rangle \geq |G_j| |G_k| A_{\max}^2 G_{\max}^2 \Bigl[(1-\epsilon)^2 (1-\eta)^2 \alpha_j \alpha_k \beta^2 - (1+\epsilon)^2 \cdot 2\eta\Bigr].
    \label{eq:inner-final}
\end{equation}

\medskip\noindent\textbf{Step 5: Upper-bound the norm.}

To compute the cosine similarity, we also need upper bounds on the gradient norm. The squared norm of $\bm{g}^{(k)}$ is:
\begin{align}
    \|\bm{g}^{(k)}\|^2
    &= \left\|\E\left[\sum_{t \in G_k} w_t^{(k)}\,\hat{A}_t \nabla_\theta \log \pi_\theta(o_t | s_t)\right]\right\|^2, \notag \\
    &\leq \E\left[\left(\sum_{t \in G_k} w_t^{(k)}\,\hat{A}_t \|\nabla_\theta \log \pi_\theta(o_t | s_t)\|\right)^2\right], \notag \\
    &\leq |G_k| \cdot \E\left[\sum_{t \in G_k} \bigl(w_t^{(k)}\bigr)^2\,\hat{A}_t^2 \|\nabla_\theta \log \pi_\theta(o_t | s_t)\|^2\right].
    \label{eq:norm-ub-sqrt}
\end{align}
where the last step uses Cauchy--Schwarz: $(\sum_i a_i)^2 \leq n \sum_i a_i^2$.

By Eq.(\ref{eq:weight-bound}), $w_t^{(k)} \leq 1+\epsilon$, and Assumptions \ref{asmp:policy} and \ref{asmp:advantage}, $|\hat{A}_t| \leq A_{\max}$ and $\|\nabla_\theta \log \pi_\theta\| \leq G_{\max}$. We have:
\begin{equation}
    \|\bm{g}^{(k)}\|^2 \leq |G_k|^2 \cdot (1 + \epsilon)^2 \cdot A_{\max}^2 \cdot G_{\max}^2.
    \label{eq:norm-ub-final}
\end{equation}
and consequently:
\begin{equation}
    \|\bm{g}^{(k)}\| \leq |G_k|\,(1 + \epsilon)\, A_{\max}\, G_{\max}.
    \label{eq:norm-ub}
\end{equation}

\medskip\noindent\textbf{Step 6: Compute the cosine similarity lower bound.}

The cosine similarity between $\boldsymbol{g}^{(j)}$ and $\boldsymbol{g}^{(k)}$ is defined as:
\begin{equation}
    \operatorname{CosSim}\bigl(\boldsymbol{g}^{(j)}, \boldsymbol{g}^{(k)}\bigr) = \frac{\langle \boldsymbol{g}^{(j)}, \boldsymbol{g}^{(k)} \rangle}{\|\boldsymbol{g}^{(j)}\| \cdot \|\boldsymbol{g}^{(k)}\|}.
\end{equation}
Substituting the inner product lower bound from Eq.(\ref{eq:inner-final}) and the norm upper bound from Eq.(\ref{eq:norm-ub}):
\begin{align}
    \operatorname{CosSim}\bigl(\boldsymbol{g}^{(j)}, \boldsymbol{g}^{(k)}\bigr)
    &\geq \frac{|G_j| |G_k| A_{\max}^2 G_{\max}^2 \Bigl[(1-\epsilon)^2 (1-\eta)^2 \alpha_j \alpha_k \beta^2 - (1+\epsilon)^2 \cdot 2\eta\Bigr]}{|G_j| |G_k| (1+\epsilon)^2 A_{\max}^2 G_{\max}^2}, \nonumber \\
    &= \frac{(1-\epsilon)^2 (1-\eta)^2 \alpha_j \alpha_k \beta^2 - (1+\epsilon)^2 \cdot 2\eta}{(1+\epsilon)^2}.
\end{align}
Splitting into two terms and simplifying:
\begin{equation}
    \operatorname{CosSim}\bigl(\boldsymbol{g}^{(j)}, \boldsymbol{g}^{(k)}\bigr) \;\geq\; \underbrace{\frac{(1-\epsilon)^2}{(1+\epsilon)^2}}_{\displaystyle \coloneqq\;\kappa(\epsilon)} \cdot (1-\eta)^2 \cdot \alpha_j \alpha_k \beta^2 \;-\; 2\eta.
    \label{eq:cosim-final}
\end{equation}
This completes the proof of the main bound. 
\end{proof}

\begin{remark}
    \textbf{Positivity condition}. The bound in Eq.(\ref{eq:cosim-final}) is strictly positive when:
\begin{equation}
    \kappa(\epsilon)(1-\eta)^2 \alpha_j \alpha_k \beta^2 > 2\eta.
\end{equation}

For strong advantage signals ($\alpha_j = \alpha_k = \beta = 1$), the condition simplifies as:
\begin{equation}
    \kappa(\epsilon)(1-\eta)^2 > 2\eta.
\end{equation}

For different $\epsilon$ values, we calculate the critical values of $\eta$ and the corresponding max angular deviations:

\begin{center}
    \begin{tabular}{cccc}
    \toprule
    $\epsilon$ & $\kappa(\epsilon)$ & $\eta_{\text{critical}}$ & Max angular deviation \\
    \midrule
    0.1 & 0.669 & 0.21 & $\arccos(0.79) \approx 38^\circ$ \\
    \midrule
    0.2 & 0.444 & 0.16 & $\arccos(0.84) \approx 33^\circ$ \\
    \midrule
    0.3 & 0.290 & 0.11 & $\arccos(0.89) \approx 27^\circ$ \\
    \bottomrule
\end{tabular}
\end{center}

where the value of $\eta_{\text{critical}}$ is determined by the smaller root of the quadratic equation: $\eta_{\text{critical}} = \frac{\kappa + 1 - \sqrt{2\kappa + 1}}{\kappa}$. One observable phenomenon is that $\eta_{\text{critical}}$ decreases as $\epsilon$ increases. A larger clipping parameter means a wider range of weight variations and thus requires stricter gradient direction consistency between different tokens. For the typical choice $\epsilon = 0.2$, the theorem guarantees strictly positive inter-group cosine similarity as long as $\eta \lesssim 0.16$, meaning each token's advantage-weighted gradient deviates from the reference direction by at most $33^\circ$.

Moreover, this theorem can also explain why modern reinforcement learning algorithms are all based on the PPO-style clipped objective, because through clipping, we can obtain a larger max angular deviation, the tolerance for token gradient direction consistency is higher, thus the training is more stable. 

\end{remark}

\section{Hyperparameters Setting}
\label{sec:hyperparameters_sup}

The full hyperparameters for training our HTPO is shown in Tab.\ref{tab:hyperparameters}.
\begin{table*}[ht]
\centering
\caption{HTPO verl training configuration.}
\label{tab:hyperparameters}
\resizebox{0.6\linewidth}{!} {
\begin{tabular}{lr}
    \toprule 
    \textbf{Parameter} & \textbf{Value} \\
    \midrule 
    trainer.total\_epochs & 1 \\
    data.gen\_batch\_size & 256 \\
    data.train\_batch\_size & 128 \\
    data.max\_prompt\_length & 2048 \\
    data.max\_response\_length & 20480 \\
    algorithm.kl\_ctrl.kl\_coef & 0.0 \\
    algorithm.filter\_groups.enable & True \\
    actor\_rollout\_ref.rollout.n & 16 \\
    actor\_rollout\_ref.rollout.temperature & 1.0 \\
    actor\_rollout\_ref.rollout.top\_p & 1.0 \\
    actor\_rollout\_ref.rollout.top\_k & -1 \\
    actor\_rollout\_ref.rollout.val\_kwargs.top\_p & 0.7 \\
    actor\_rollout\_ref.actor.ppo\_mini\_batch\_size & 32 \\
    actor\_rollout\_ref.actor.loss\_agg\_mode & token-mean \\
    actor\_rollout\_ref.actor.entropy\_coeff & 0.0 \\
    actor\_rollout\_ref.actor.optim.lr & $1e^{-6}$ \\
    actor\_rollout\_ref.actor.optim.weight\_decay & 0.1 \\
    actor\_rollout\_ref.actor.policy\_loss.clip\_entropy\_ratio1 & 0.006 \\
    actor\_rollout\_ref.actor.policy\_loss.clip\_entropy\_ratio2 & 0.02 \\
    actor\_rollout\_ref.actor.policy\_loss.difficulty\_level & 0.5 \\
    actor\_rollout\_ref.actor.clip\_ratio\_low & 0.2 \\
    actor\_rollout\_ref.actor.clip\_ratio\_high & 0.28 \\
    reward\_model.overlong\_buffer.enable & True \\
    reward\_model.overlong\_buffer.len & 4096 \\
    \bottomrule 
\end{tabular}}
\end{table*}

\section{Addition Experimental Results and Analysis}
\label{sec:additional_results}

\subsection{Additional Training Dynamics}
\label{sec:additional_training_dynamics}

Fig.\ref{fig:training_dynamcis_sup1}(a) depicts the mean reward trajectories on the training set, which are critical for evaluating sample efficiency. The plots indicate that all methods, including our proposed HTPO and the baselines, basically converge within 1200 training steps. This demonstrates that the substantial performance improvements achieved by HTPO are not attributed to extended training duration, but rather stem from more effective learning signals derived from our token-level divide-and-conquer mechanism.

Fig.\ref{fig:training_dynamcis_sup1}(b) presents the mean reward trajectories on AIME'24. The observed trends are highly consistent with those in Fig.\ref{fig:training_dynamics}(b) of the main paper: HTPO consistently achieves a higher final reward ceiling than the DAPO baseline, demonstrating the robustness of our performance gains.

Fig.\ref{fig:training_dynamcis_sup1}(c) displays the response length clip ratio, which is the fraction of generated sequences that
reach the maximum token limit. The significantly lower clip ratio, coupled with higher testing reward, further demonstrates that our HTPO is not only superior to GRPO and DAPO in performance, but also more efficient. This indicates that our method does not blindly encourage the model to generate more elaborate and detailed reasoning chains, but rather guides it toward exploring more efficient solving paths.

\begin{figure*}[ht]
    \centering
    \includegraphics[width=1.0\linewidth]{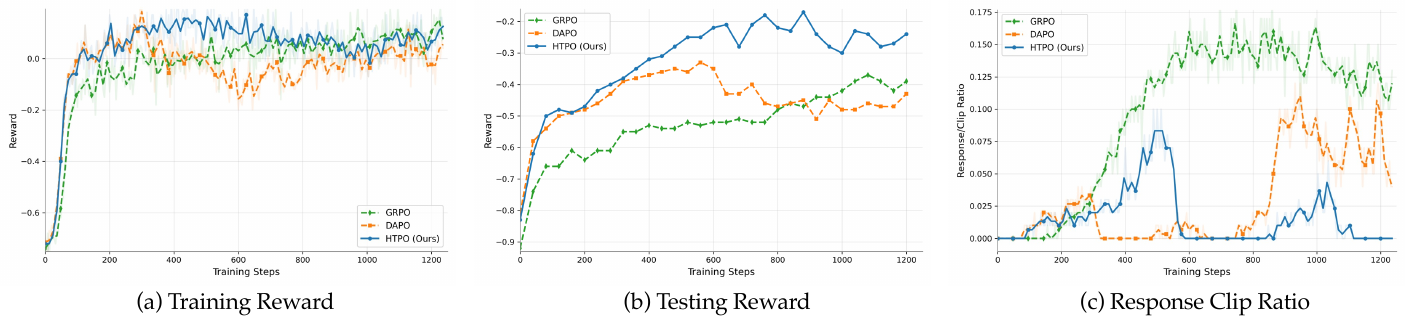}
    \caption{Training dynamics of the Qwen3-8B-Base model on AIME'24. The plots show the evolution of (a) training reward, (b) testing reward, and (c) response length clip ratio. The training reward curves show that all methods converge on the training data by approximately 1200 steps, confirming that the performance gains of our approach are not achieved as the expense of sample efficiency. The testing reward curves further demonstrate the robust and consistent performance improvements of our method. The response clip ratio curves indicate that our method does not blindly enhance exploration but maintains the efficiency of responses.}
    \label{fig:training_dynamcis_sup1}
\end{figure*}

\subsection{Additional Test-Time Scale Results}
\label{sec:additional_test_time_scale}

To further validate the effectiveness of HTPO in scaling test-time computation, we present additional results on four benchmarks: AIME'24, AMC'23, Minerva, and Olympiadbench (Fig.\ref{fig:test_time_scale_sup}). Similar to the trend observed on AIME'25 (Fig.\ref{fig:test_time_scale}), HTPO maintains a consistent performance advantage over GRPO and DAPO across all datasets and sampling budgets. These results underscore HTPO’s robustness and effectiveness in leveraging increased test-time computation to boost model performance.

\begin{figure*}[ht]
    \centering
    \begin{subfigure}[t]{0.48\textwidth}
        \includegraphics[width=\linewidth]{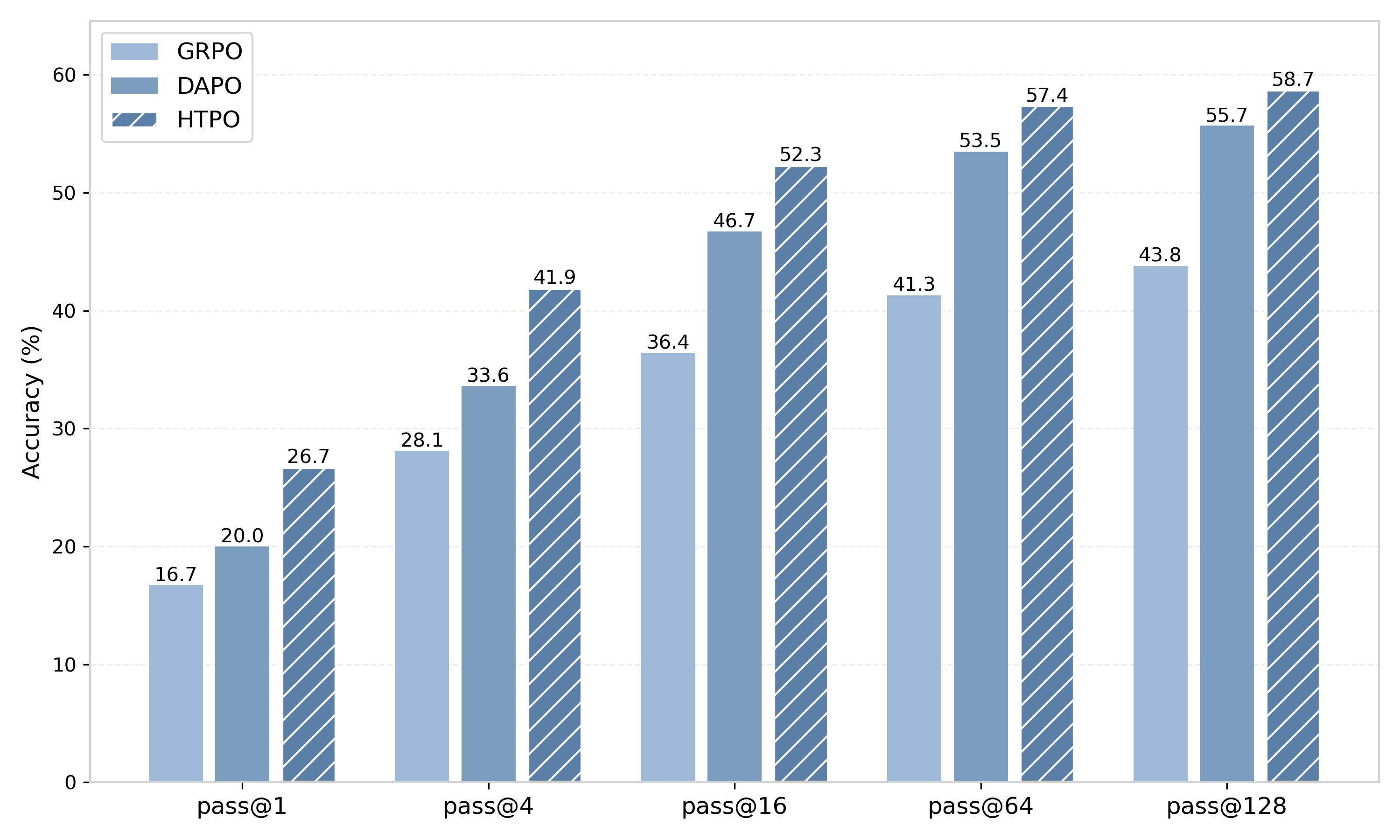}
        \caption{AIME'24}
        \label{fig:scale_aime24}
    \end{subfigure}
    \hfill
    \begin{subfigure}[t]{0.48\textwidth}
        \includegraphics[width=\linewidth]{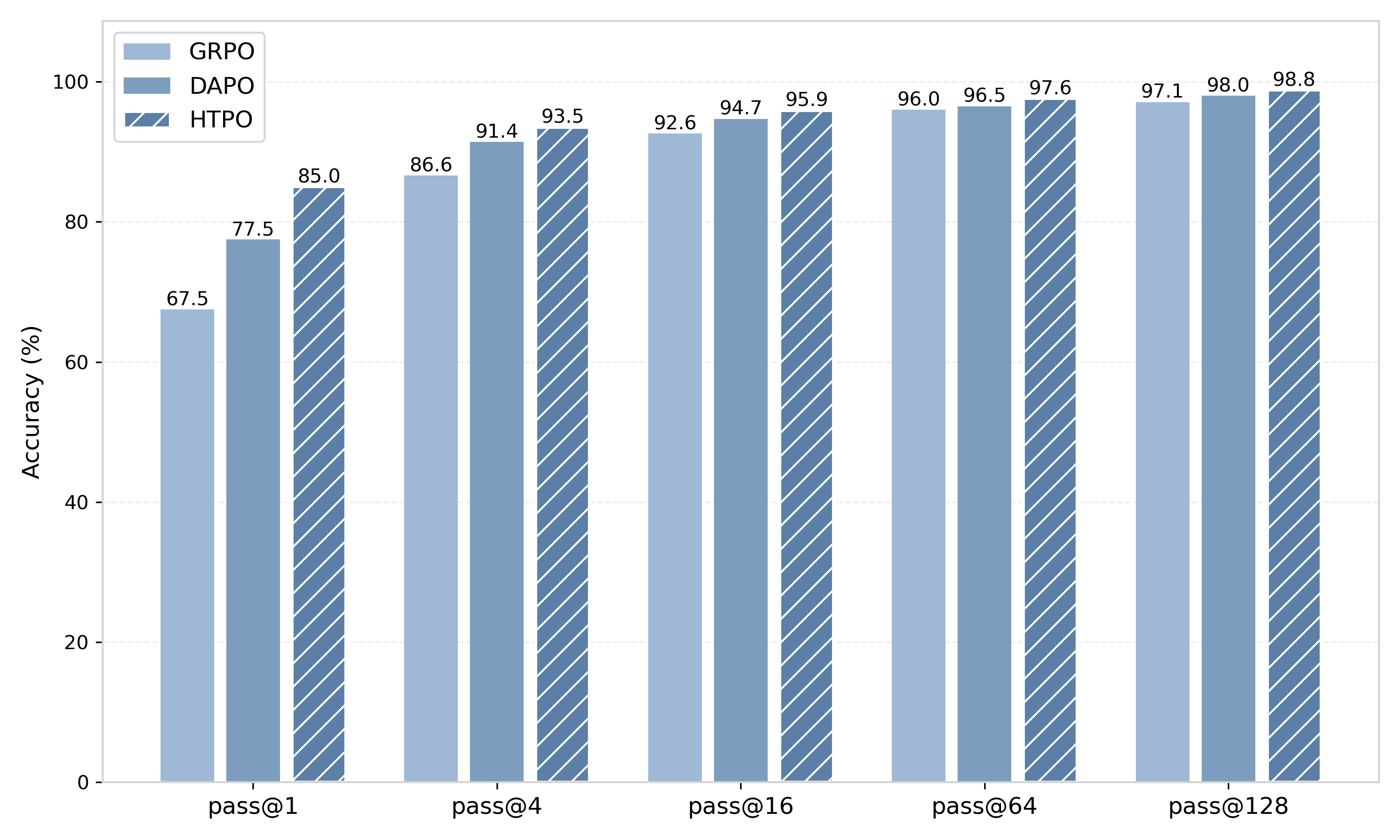} 
        \caption{AMC'23}
        \label{fig:scale_amc23}
    \end{subfigure}
    \begin{subfigure}[t]{0.48\textwidth}
        \includegraphics[width=\linewidth]{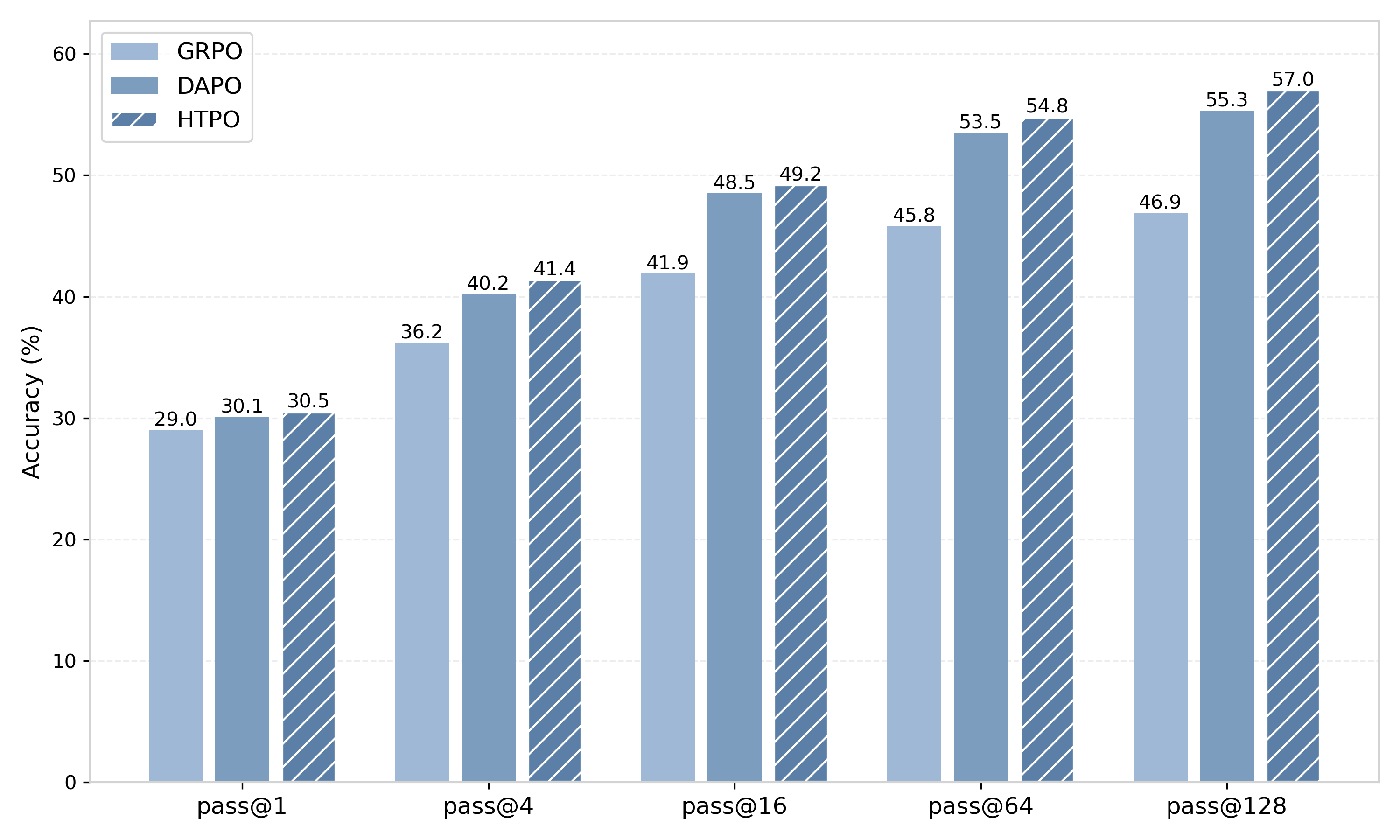}
        \caption{Minerva}
        \label{fig:scale_minerva}
    \end{subfigure}
    \hfill
    \begin{subfigure}[t]{0.48\textwidth}
        \includegraphics[width=\linewidth]{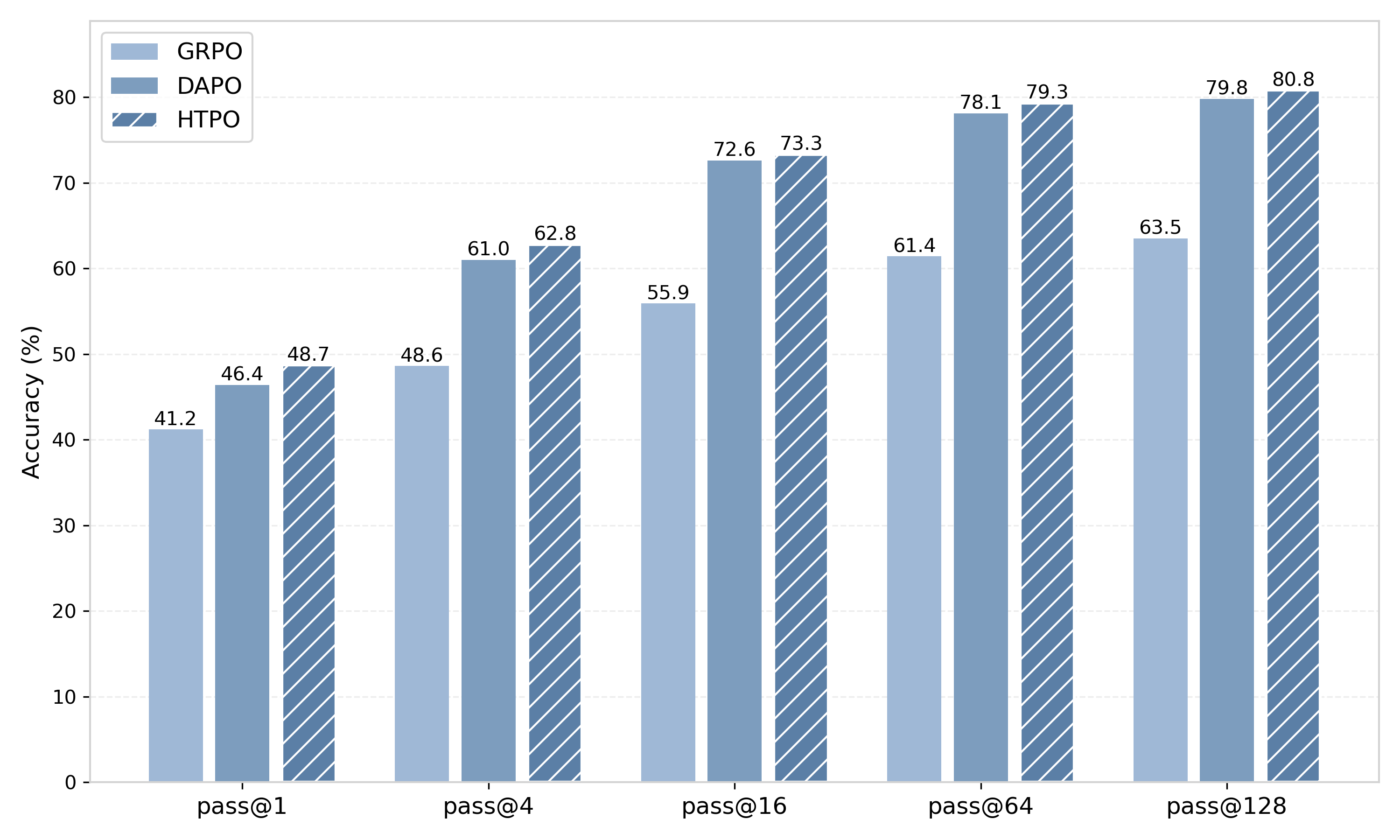} 
        \caption{Olympiadbench}
        \label{fig:scale_olympiadbench}
    \end{subfigure}
    \caption{Impact of scaling test-time computation on diverse reasoning benchmarks. We evaluate the performance of GRPO, DAPO, and HTPO across four additional datasets: (a) AIME’24, (b) AMC’23, (c) Minerva, and (d) Olympiadbench. HTPO consistently outperforms GRPO and DAPO across all benchmarks and sampling budgets, demonstrating its robustness in scaling test-time computation.}
    \label{fig:test_time_scale_sup}
\end{figure*}

\subsection{Understanding HTPO: The Role of Each Group}
\label{sec:group_policy_entropy}

In Tab.\ref{tab:method_ablation}, we demonstrate the effectiveness of our proposed group-specific optimization objectives. To further investigate the specific effects of these objectives and verify their alignment with our design analysis in Sec.\ref{sec:method}, we illustrate the entropy evolution patterns of different groups in Fig.\ref{fig:group_entropy_ablation}. Please note that for each group, we apply our proposed optimization objective to the target group while retaining the original optimization objective for the others (consistent with the setup in Tab.\ref{tab:method_ablation}).

As shown in Fig.\ref{fig:group_entropy_ablation}, entropy increases more significantly in \textbf{Group 1} and \textbf{Group 2}, playing a role in promoting model exploration. In \textbf{Group 4}, \textbf{6}, and \textbf{8}, entropy stabilizes as training progresses, playing a role in preventing over-exploration. These observations align well with our design principles (see Sec.\ref{sec:method}):

$\bullet$ \textbf{Group 1}: We discard the policy gradients of some lowest-entropy tokens to avoid the impact of highly certain tokens on reducing policy entropy.

$\bullet$ \textbf{Group 2}: We enhance the impact of high-entropy tokens on policy learning to guarantee the exploratory capabilities of the model on hard prompts.

$\bullet$ \textbf{Group 4 \& 8}: In wrong answers, we impose heavier penalties on high-entropy tokens to mitigate root errors caused by over-exploration of the model.

$\bullet$ \textbf{Group 6}: We clip a fraction of the highest-entropy tokens from policy gradient updates to avoid overly complex reasoning chains on easy prompts.

The observed entropy patterns in \textbf{Group 1} and \textbf{Group 2} are consistent with our goal of fostering exploration. Similarly, the stabilization seen in \textbf{Group 4}, \textbf{6}, and \textbf{8} also reflects our goal to avoid incorrect responses or unnecessary complexity for easy prompts caused by model over-exploration.

\begin{figure*}[ht]
    \centering
    \includegraphics[width=0.6\linewidth]{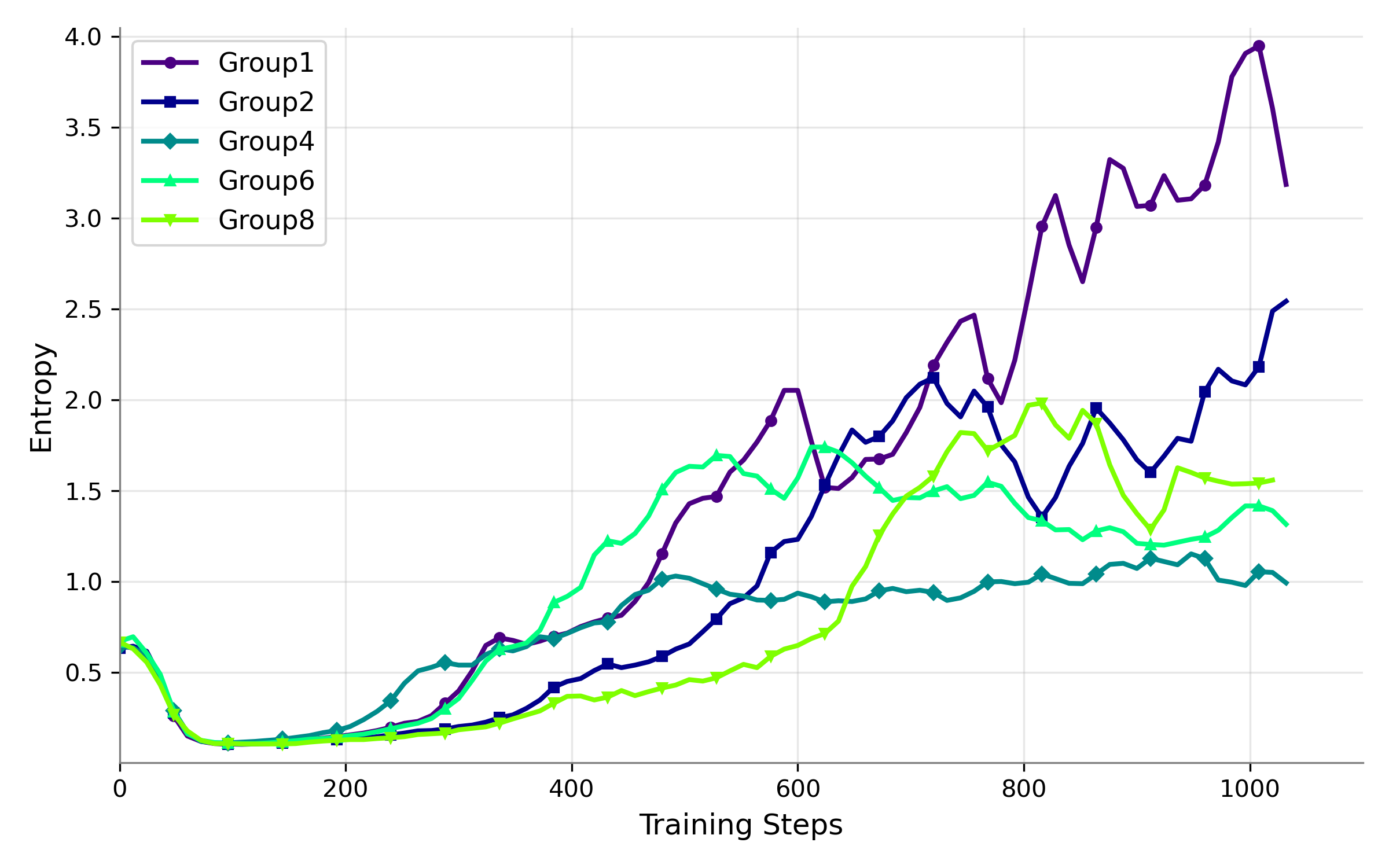}
    \caption{Policy entropy evolution patterns in different token groups. For each group, we apply our proposed optimization objective to the target group while retaining the original optimization objective for the others.}
    \label{fig:group_entropy_ablation}
\end{figure*}

\section{Experimental Argument of Token Entropy Patterns}
\label{sec:token_entropy_analysis}

In this section, we present the patterns of token entropy through both qualitative visualizations and quantitative statistical results. In this way, we aim to more intuitively illustrate the motivation of our method and substantiate the rationality behind the optimization objective designs proposed in the main text (Sec.\ref{sec:method}). To this end, we utilize Qwen3-8B \cite{Qwen3}, one of the most recent and capable reasoning models, to generate responses for prompts from the DAPO-Math-17K dataset \cite{DAPO}, with a decoding temperature of $T = 1.0$. We enforce the model to output chain-of-thought responses for every question. For each prompt, 16 responses are generated, and we collect over $10^6$ tokens from 500 questions. For each token $o_t$, the entropy is computed as $H_t = -\Sigma_{v\in \mathcal{V}}\pi_{\theta_{old}}(v|q,o_{<t}){\rm log}\pi_{\theta_{old}}(v|q,o_{<t})$, where $\mathcal{V}$ represents the token vocabulary. First, we conduct an overall analysis of all the tokens from the perspective of entropy, counting the top 100 tokens with the highest and lowest
average entropy, respectively. Then, in Fig.\ref{fig:word_cloud}, we plot the word cloud maps of these top 100 tokens. Through this intuitive visualization, we can conclude the typical characteristics of token entropy patterns:

1. Tokens with the highest entropy are typically the logical connection or transition words, such as ``however'' and ``unless'' (indicating logical contrasts or shifts), ``hence'' and ``therefore'' (presenting progressive reasoning), or ``since'' and ``because'' (expressing causality). Similarly, tokens like ``suppose'', ``given'' and ``let'' frequently appear in mathematical derivations to introduce assumptions, known conditions, or definitions.

2. By contrast, tokens with the lowest entropy tend to words that just complete the current sentence or sub-words that finish constructing a word, all of which don't directly affect the reasoning logic and exhibit high certainty.

\begin{figure*}[ht]
    \centering
    \begin{subfigure}[t]{0.48\textwidth}
        \includegraphics[width=\linewidth]{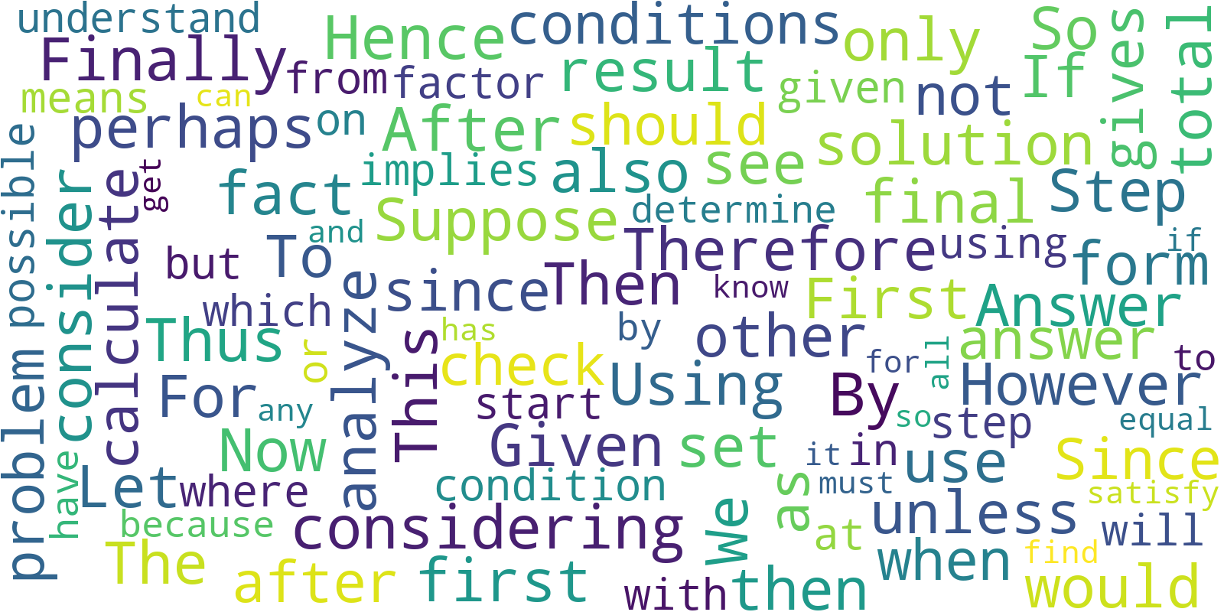}
        \caption{Frequent tokens with the highest average entropy}
        \label{fig:high100}
    \end{subfigure}
    \hfill
    \begin{subfigure}[t]{0.48\textwidth}
        \includegraphics[width=\linewidth]{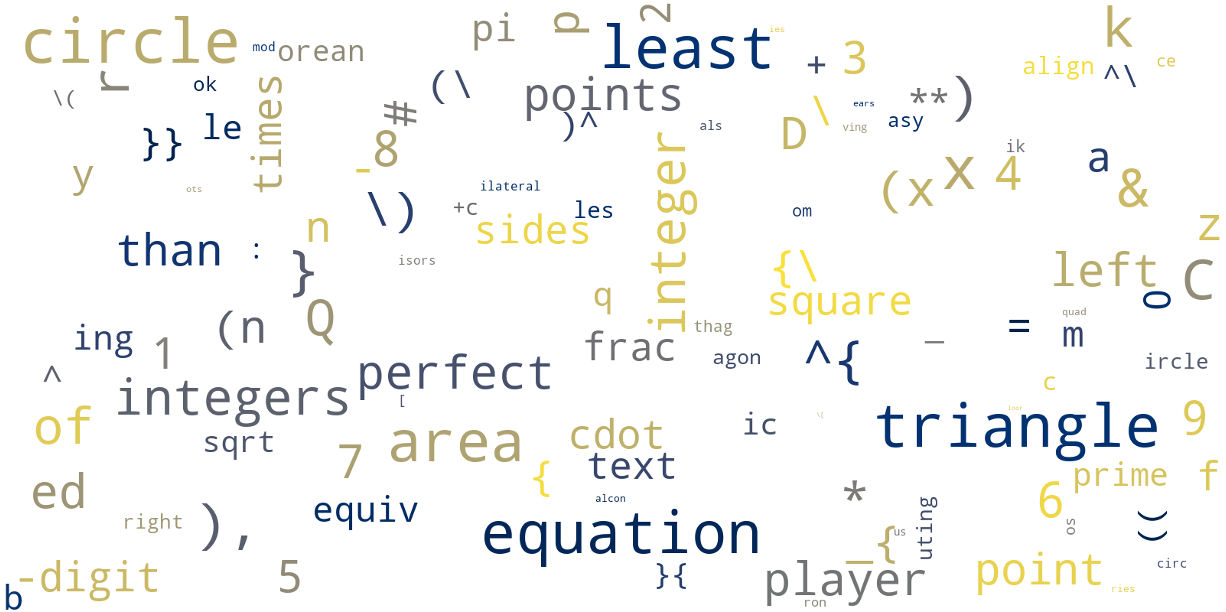} 
        \caption{Frequent tokens with the lowest average entropy}
        \label{fig:low100}
    \end{subfigure}
    \caption{Word cloud maps of the top 100 tokens with the highest and lowest average entropy, respectively. A larger font size indicates a higher average token entropy. Tokens with the highest average entropy typically function as logical connectors or transistors to determine reasoning directions, such as  ``hence'', ``therefore'', ``however'' and ``if'', etc. While tokens with the lowest average entropy are basically the words or sub-words that are not likely to affect reasoning logic in natural language.}
    \label{fig:word_cloud}
\end{figure*}

Our analysis results are also consistent with the conclusion drawn in the previous work \cite{80/20-rule}. In Chain-of-Thought (CoT) reasoning, high-entropy tokens primarily influence the reasoning directions, which are usually associated with the exploration ability and diversity of the model. Whereas low-entropy tokens tend to execute reasoning steps along the established path and primarily guarantee the linguistic coherence, representing the exploitation ability and certainty of the model.

Furthermore, since our method also divides tokens based on prompt difficulty and answer correctness, it is essential to examine the specific patterns of token entropy across these different cases (\emph{i.e.}, hard/easy prompts, correct/wrong answers). To this end, we further provide the token entropy heatmaps for an entire CoT response in Fig.\ref{fig:hard_correct} to \ref{fig:easy_wrong}. By observing these figures, we can identify the following token entropy patterns:

1. Regardless of hard/easy prompts or correct/wrong responses, logical tokens mainly exist in high-entropy tokens, whereas low-entropy tokens usually consist of words that have minimal impact on the reasoning process (this does not imply that there are no logical words, such as ``therefore'', ``since'' and ``but'', in low-entropy tokens, but the quantity is relatively small).

2. Compared to correct responses, the model exhibits higher uncertainty in incorrect
responses. The entropy of high-entropy tokens in wrong responses is overall higher than that of high-entropy tokens in correct ones.

The two token patterns observed above also validate the motivation rationale behind our algorithmic design. Specifically, in correct responses, we should focus more on high-entropy tokens to maintain the model's exploration capability, while effectively leveraging the deterministic knowledge in low-entropy tokens without over-exploitation. In wrong responses, we should impose stronger penalties on high-entropy tokens to avoid the erroneous reasoning paths, while also suppressing low-entropy tokens, as they also have contribution to the wrong answers.

In the following, based on Fig.\ref{fig:hard_correct} to \ref{fig:easy_wrong}, we will provide a case-by-case analysis to demonstrate that the algorithmic designs proposed in the main text are well aligned with the specific token entropy patterns observed in each case. To this end, we can more intuitively support the rationality of our method.

1. As illustrated in Fig.\ref{fig:hard_correct}, for correct responses to hard prompts, high-entropy tokens predominantly correspond to logical connectors, such as ``since'', ``consider'', ``if'', ``which'', and ``thus'', etc. In contrast, low-entropy tokens primarily consist of mathematical expressions that should be deterministic knowledge of the model. Given the complexity of hard prompts, maintaining the model's exploration ability is crucial to facilitate logical transitions and uncover potential solution paths. Consequently, we could pay more attention to high-entropy tokens while avoiding the over-exploitation of low-entropy tokens. In this regard, the optimization objectives proposed in \textbf{Group 1} and \textbf{Group 2} in the main text align closely with the analysis conclusion drawn here.

2. In Fig.\ref{fig:hard_wrong}, compared to Fig.\ref{fig:hard_correct}, the model exhibits higher uncertainty in wrong responses (\emph{i.e.}, the entropy of high-entropy tokens in wrong responses is overall higher than that in correct ones). These high-entropy tokens also contain many logical connection words, such as ``leading'', ``ultimately'', ``clearly'', ``therefore'' and ``thus'', etc, which often tend to trigger subsequent erroneous reasoning steps (\emph{e.g.}, in the second example, the ``Thus \$E[\symbol{"5C} operator name\{area\}] = \symbol{"5C} frac\{1\}\{\symbol{"5C} max(\symbol{"5C} max(a,b)+|a-b|,1\})\$, so the answer is \$4 + 1=\symbol{"5C} boxed\{5\}\$'', where the token ``thus'' introduces a flawed derivation, thereby derailing the final correct answer). Consequently, these high-entropy tokens should receive stronger penalties to effectively suppress incorrect reasoning directions. For low-entropy tokens, as they participate in incorrect reasoning steps, they also should be penalized to prevent the reinforcement of errors. Therefore, the optimization objectives proposed in \textbf{Group 3} and \textbf{Group 4} in the main text are well aligned with the analysis conclusion presented here.

3. In Fig.\ref{fig:easy_correct}, mirroring the patterns observed in hard prompts, in easy prompts, the logical words exhibit a similar preference for appearing as high-entropy tokens. However, unlike hard prompts, for easy prompts, we should encourage exploitation and don't want too complex reasoning chains. Therefore, conversely, we should avoid too many logical transitions and directly learn low-entropy tokens for effective exploitation. To this end, the optimization objectives designed in \textbf{Group 5} and \textbf{Group 6} in the main text are also aligned with the analysis conclusion drawn here.

4. In Fig.\ref{fig:easy_wrong}, consistent with our preceding analysis, high-entropy tokens also tend to dictate the reasoning directions in easy prompts (\emph{e.g.}, the ``therefore'' in the first example, the ``obviously'' and  ``assuming'' in the second). Consequently, to curb these erroneous reasoning paths, we should also, analogous to the strategy for hard prompts/wrong answers, impose stronger suppression on these high-entropy tokens. Moreover, even in easy prompts, the model is still confident about the low-entropy tokens in wrong responses, indicating that the model may have some inherent erroneous biases. Therefore, to prevent the model from persisting in confident errors for easy prompts, it's also crucial to ensure that these erroneous low-entropy tokens can be effectively suppressed. In this regard, the optimization objectives designed in \textbf{Group 7} and \textbf{Group 8} in the main text align closely with the analysis conclusion presented here.

\begin{figure*}[ht]
    \centering
    \includegraphics[width=0.6\linewidth]{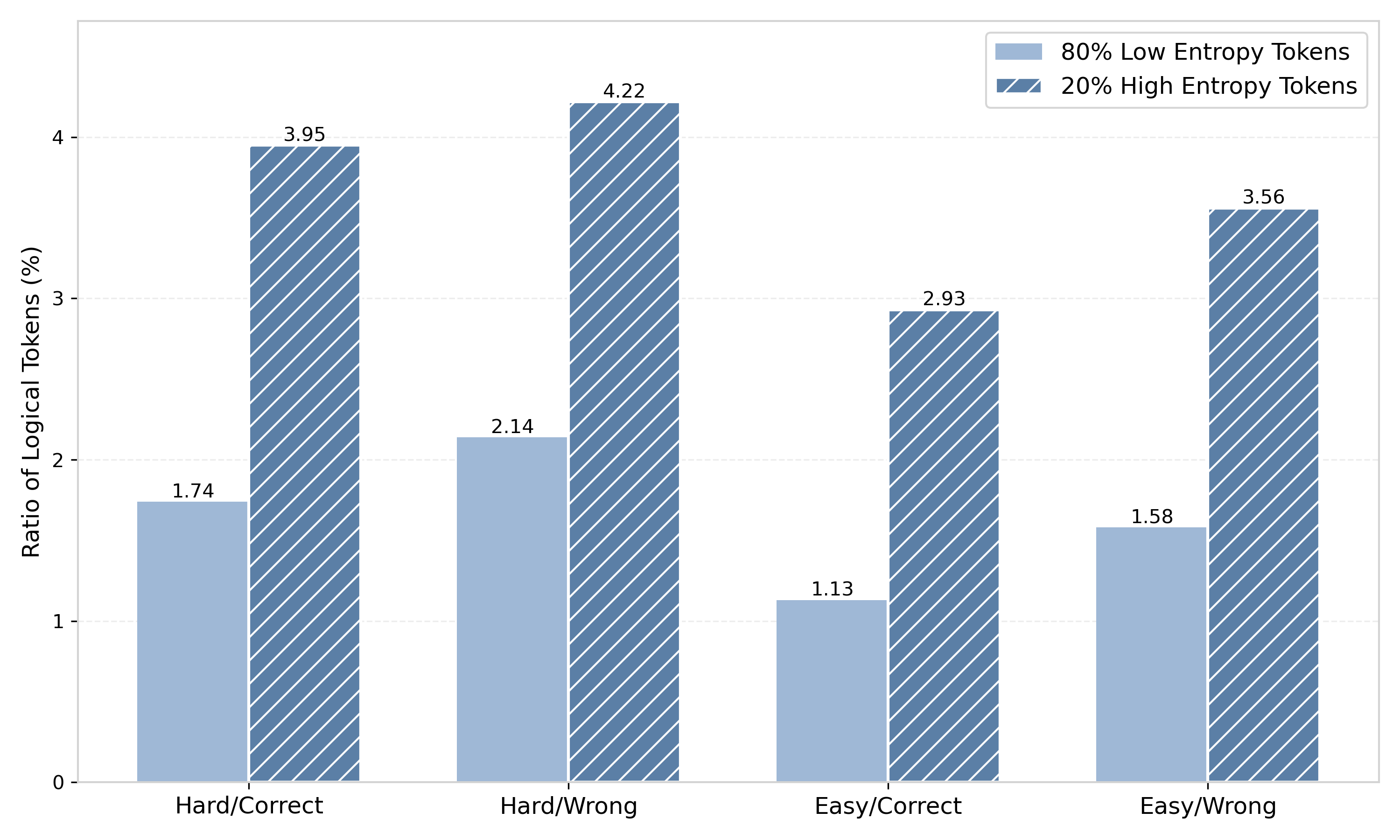}
    \caption{Occurrence ratio of logical words from Fig.\ref{fig:high100} appearing in the high-entropy and low-entropy token groups.}
    \label{fig:logical_token_ratio}
\end{figure*}

Finally, we quantitatively calculate the occurrence rate of logical connectors and transition words from Fig.\ref{fig:high100} appearing in the high-entropy and low-entropy token groups as divided by our method. The results are shown in Fig.\ref{fig:logical_token_ratio}. It can be found that the proportion of logical words appearing in high-entropy tokens is significantly higher than that in low-entropy tokens. This trend holds consistently across hard/easy problems as well as correct/wrong responses. This finding further substantiates the basic idea of our method design: the optimization objective design in high-entropy tokens should be around exploration, whereas the optimization objective design in low-entropy tokens should focus on exploitation.

\begin{figure*}[ht]
    \centering
    \begin{subfigure}[t]{0.7\textwidth}
        \includegraphics[width=\linewidth]{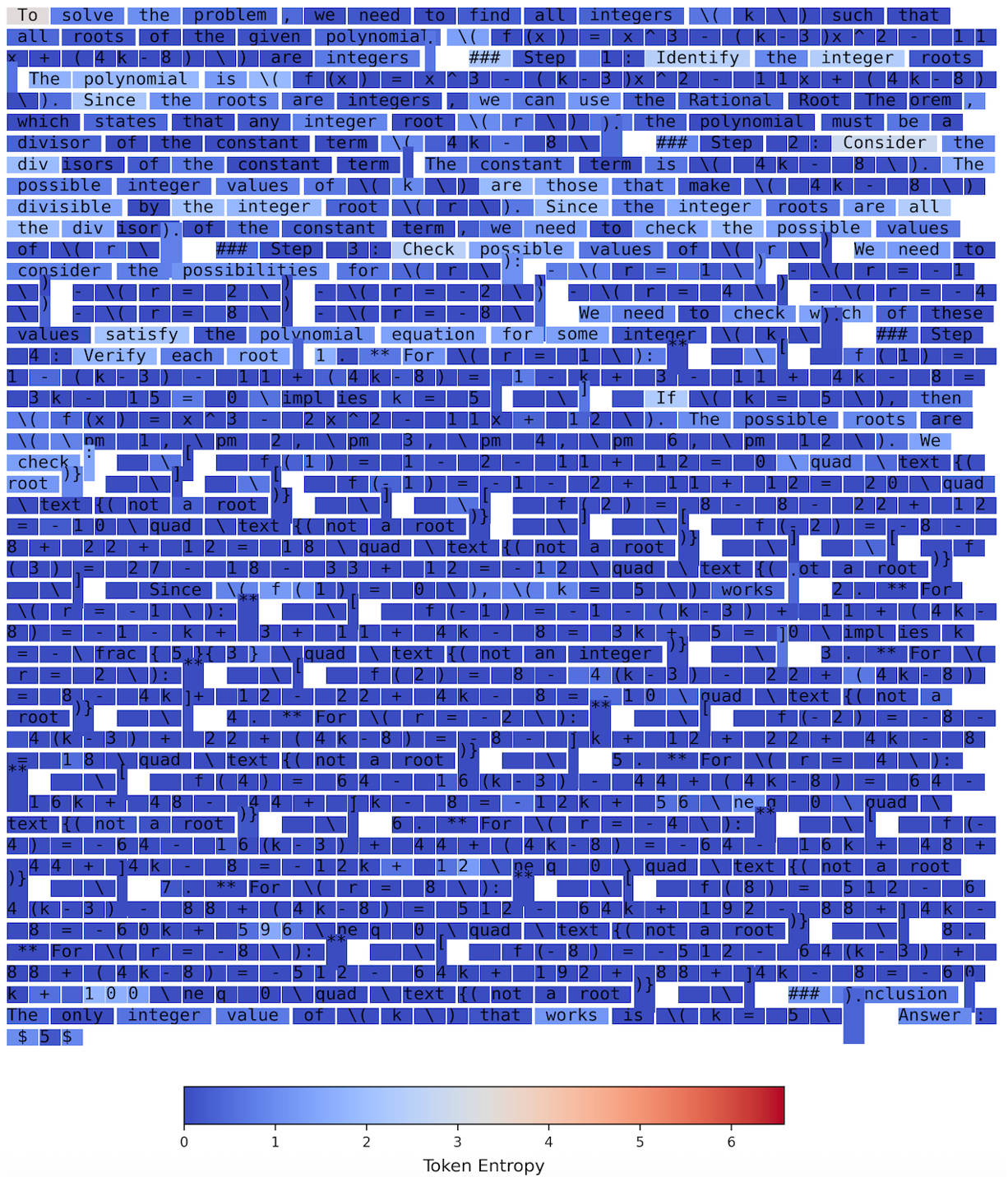}
    \end{subfigure}
    \hfill
    \begin{subfigure}[t]{0.7\textwidth}
        \includegraphics[width=\linewidth]{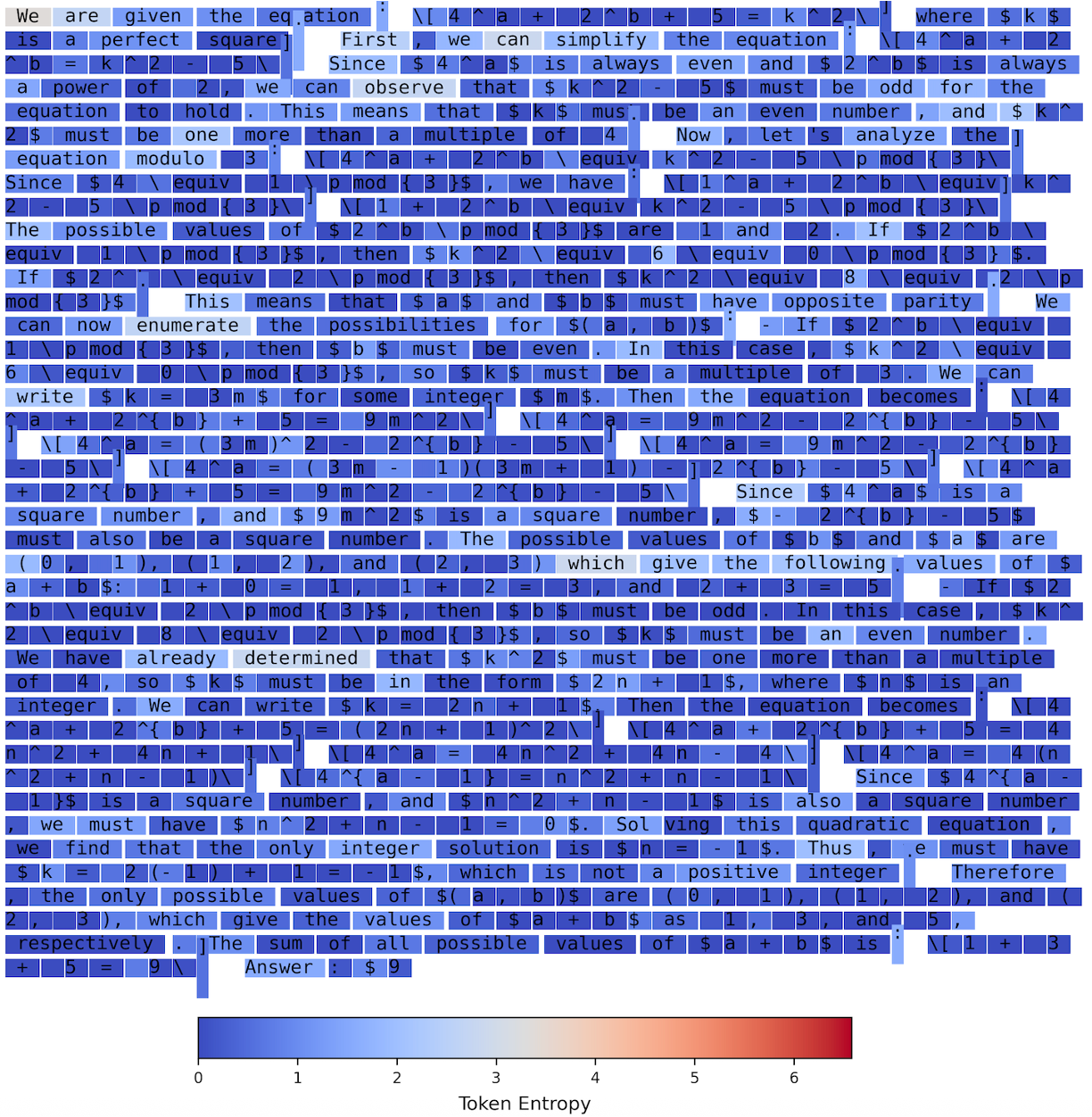}
    \end{subfigure}
    \caption{Visualization of token entropy. These tokens are from the correct responses generated by the model for hard prompts.}
    \label{fig:hard_correct}
\end{figure*}

\begin{figure*}[ht]
    \centering
    \begin{subfigure}[t]{0.7\textwidth}
        \includegraphics[width=\linewidth]{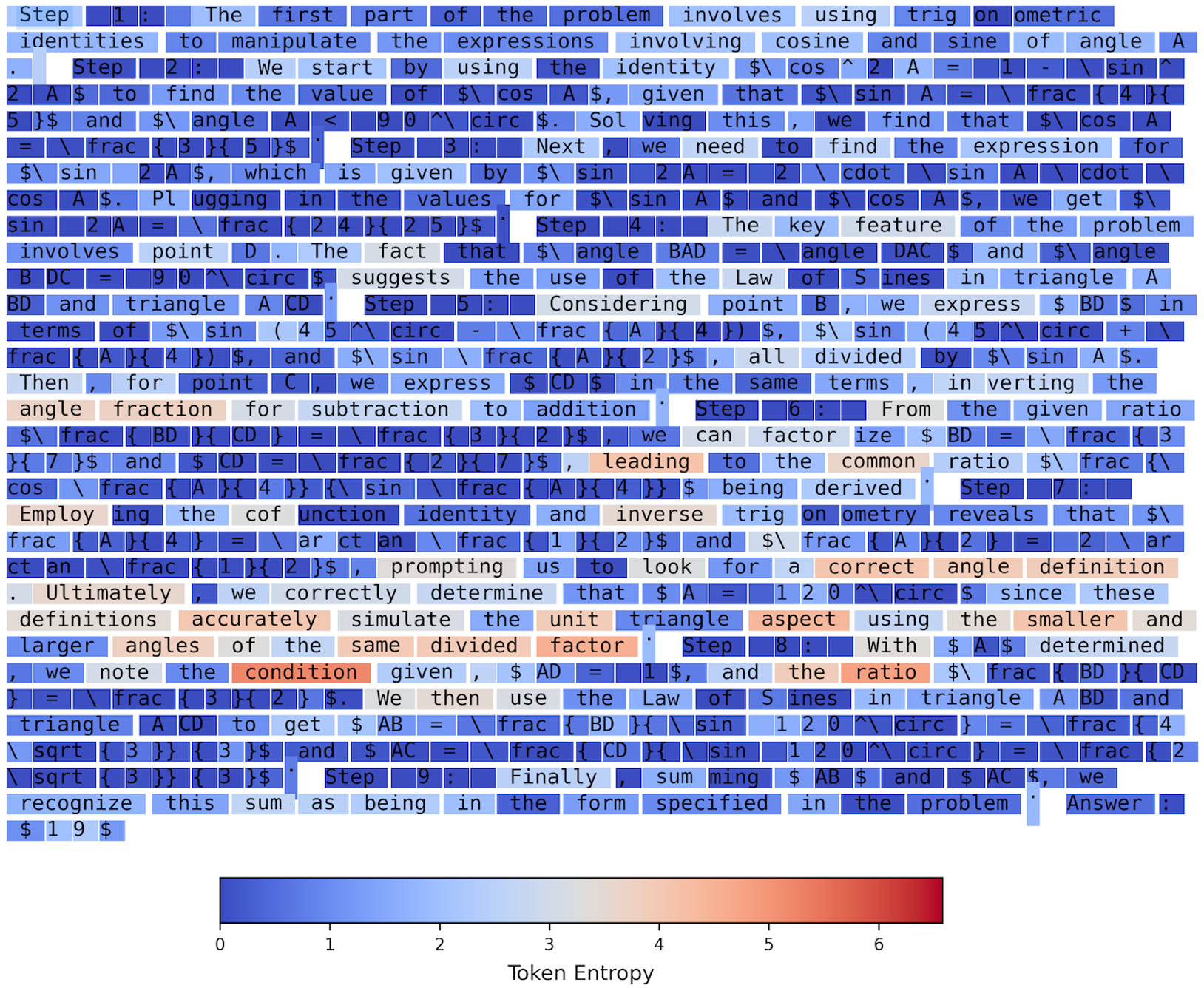}
    \end{subfigure}
    \hfill
    \begin{subfigure}[t]{0.7\textwidth}
        \includegraphics[width=\linewidth]{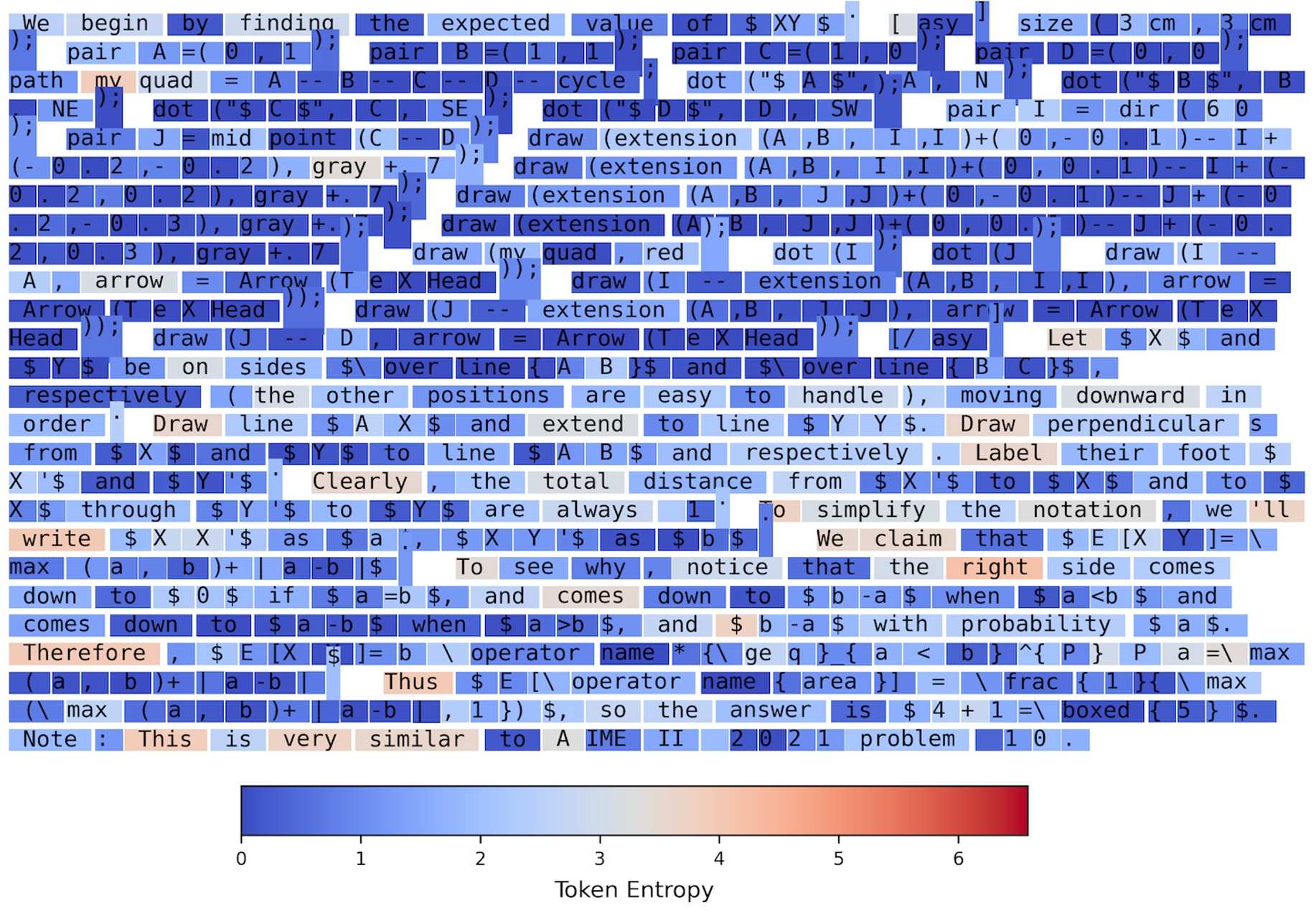}
    \end{subfigure}
    \caption{Visualization of token entropy. These tokens are from the wrong responses generated by the model for hard prompts.}
    \label{fig:hard_wrong}
\end{figure*}

\begin{figure*}[ht]
    \centering
    \begin{subfigure}[t]{0.7\textwidth}
        \includegraphics[width=\linewidth]{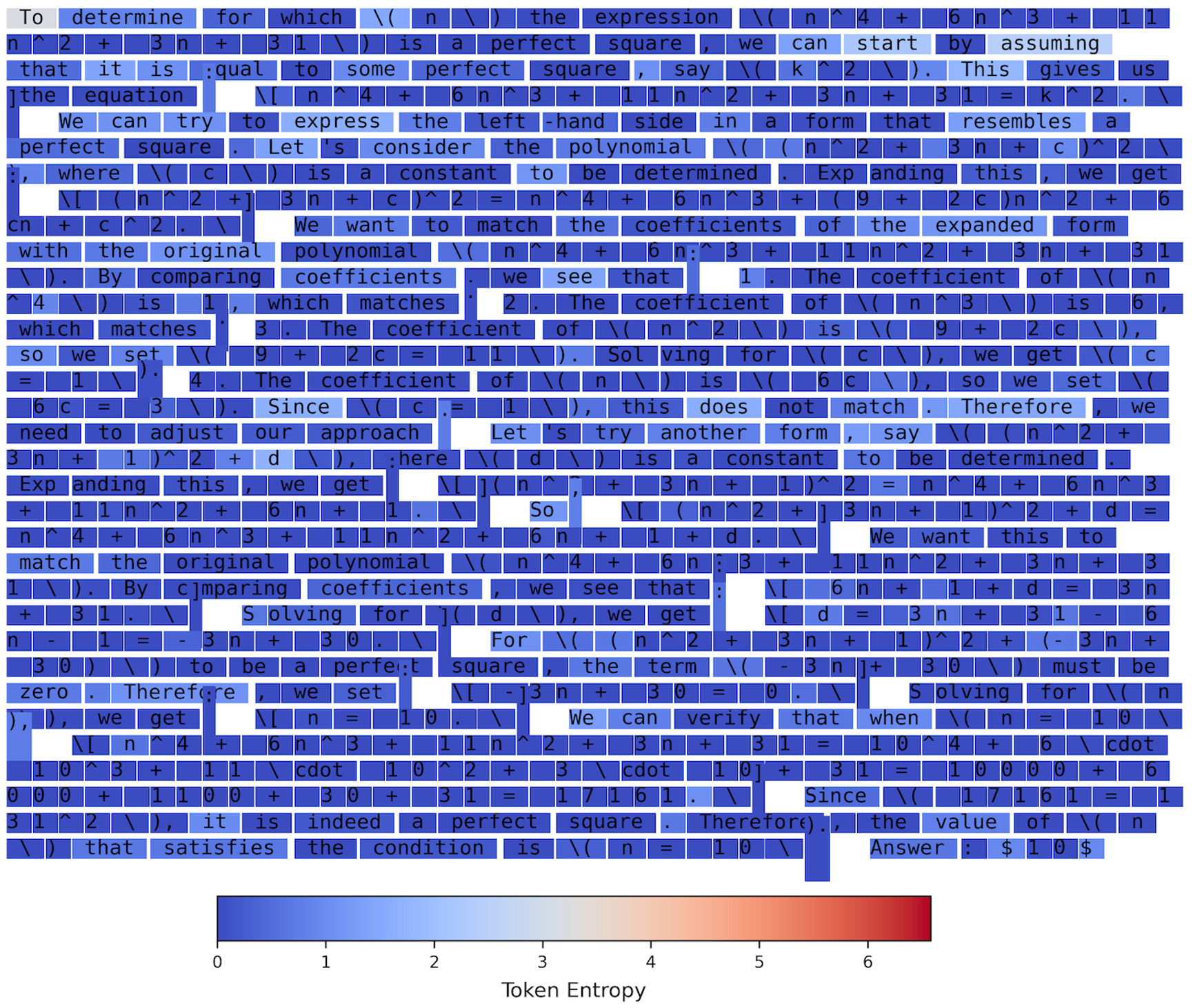}
    \end{subfigure}
    \hfill
    \begin{subfigure}[t]{0.7\textwidth}
        \includegraphics[width=\linewidth]{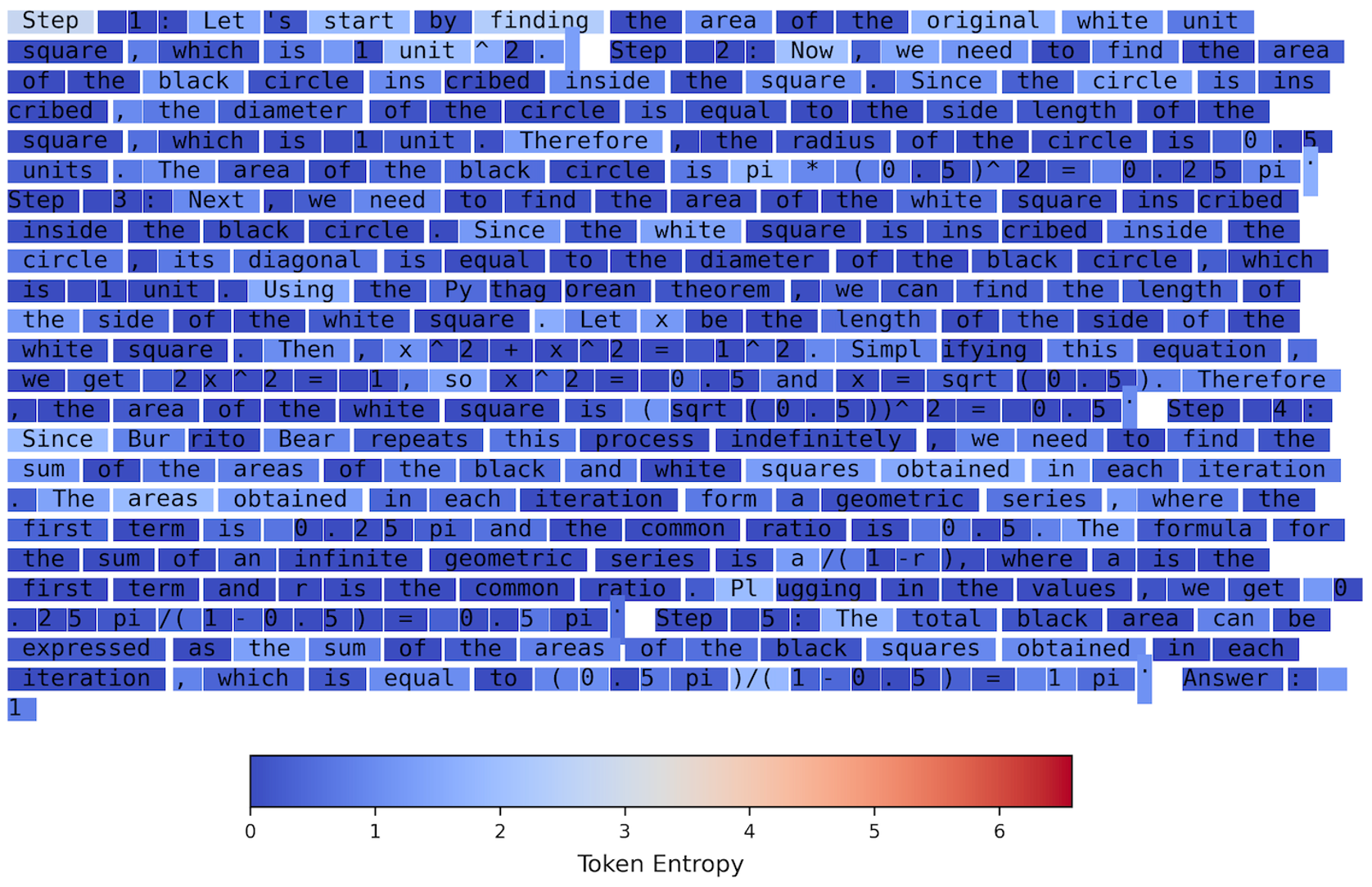}
    \end{subfigure}
    \caption{Visualization of token entropy. These tokens are from the correct responses generated by the model for easy prompts.}
    \label{fig:easy_correct}
\end{figure*}

\begin{figure*}[ht]
    \centering
    \begin{subfigure}[t]{0.7\textwidth}
        \includegraphics[width=\linewidth]{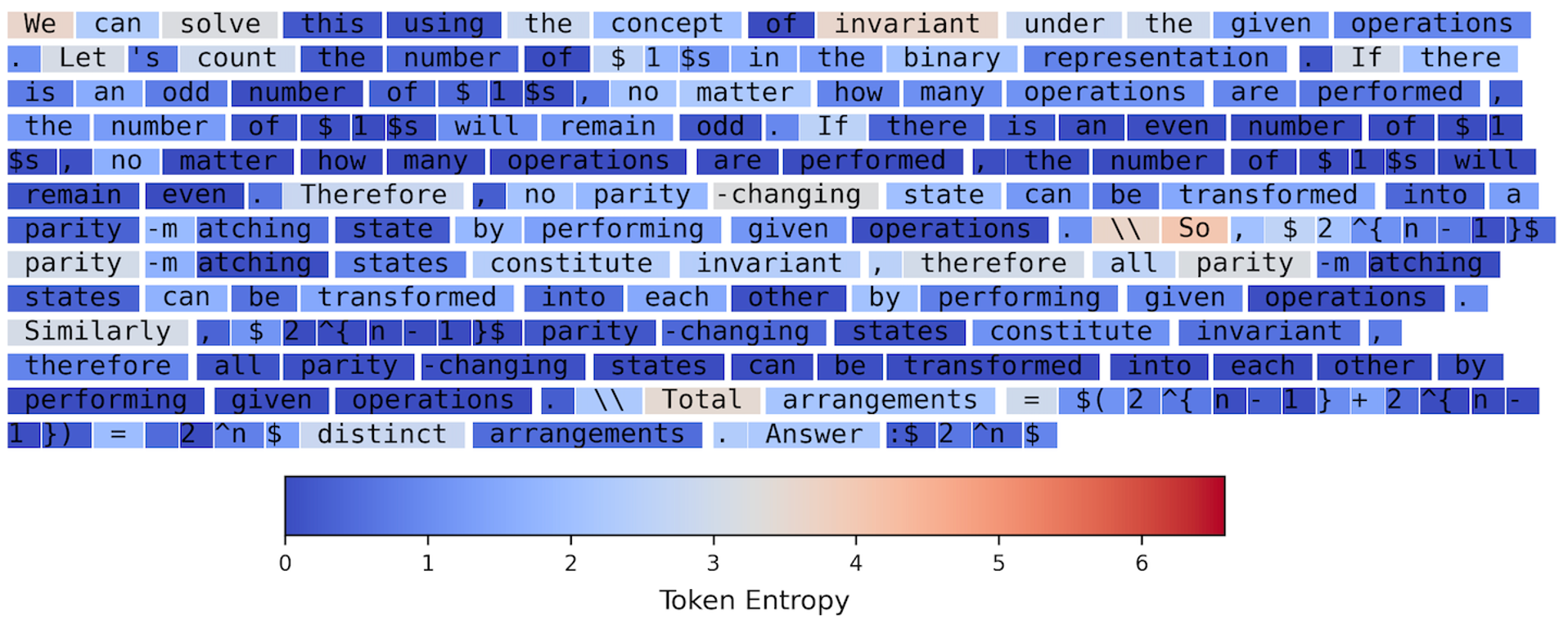}
    \end{subfigure}
    \hfill
    \begin{subfigure}[t]{0.7\textwidth}
        \includegraphics[width=\linewidth]{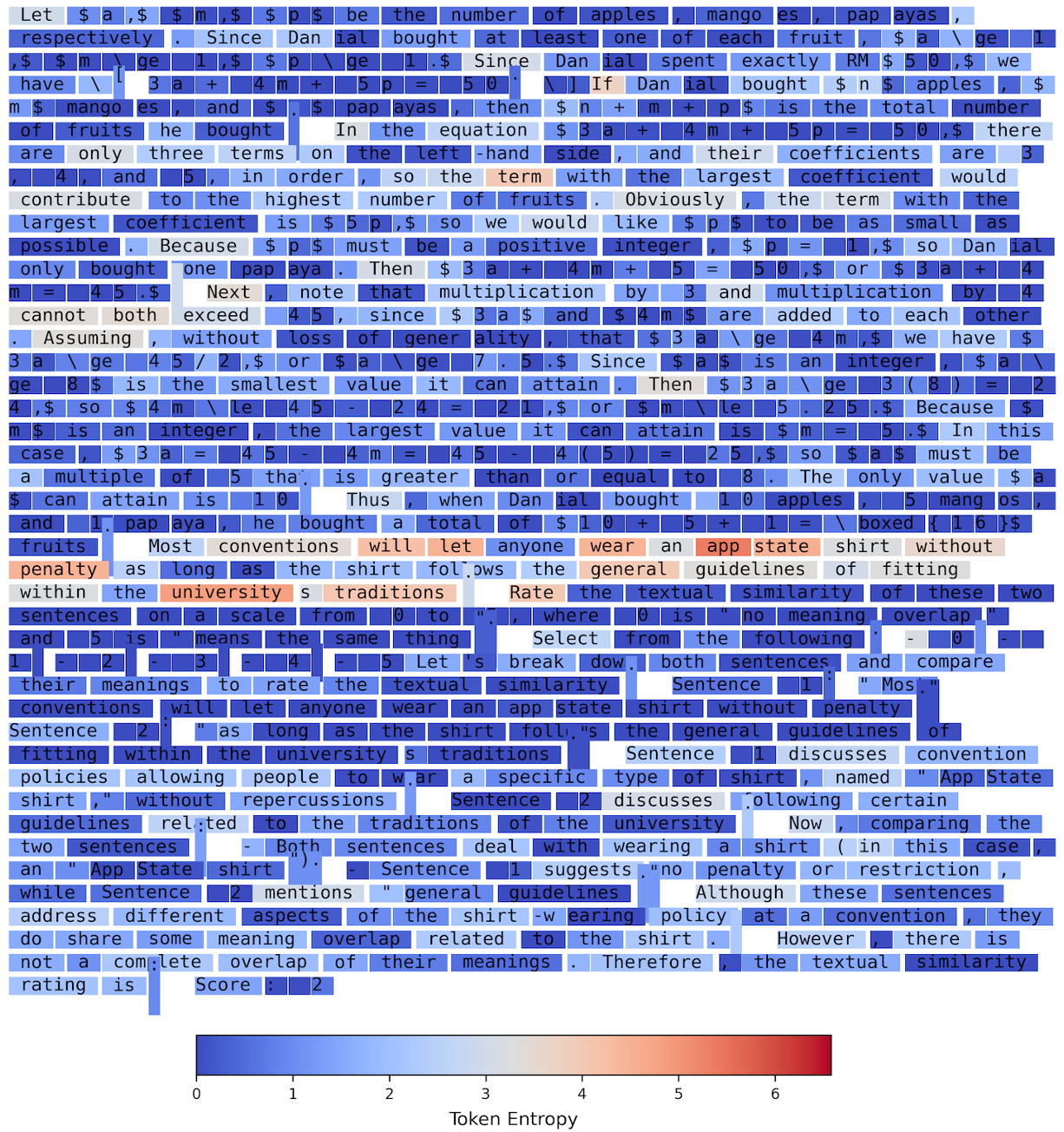}
    \end{subfigure}
    \caption{Visualization of token entropy. These tokens are from the wrong responses generated by the model for easy prompts.}
    \label{fig:easy_wrong}
\end{figure*}


\end{document}